\def\@fnsymbol#1{\ensuremath{\ifcase#1\or \star\or \dagger\or \ddagger\or
   \mathsection\or \mathparagraph\or \|\or **\or \dagger\dagger
   \or \ddagger\ddagger \else\@ctrerr\fi}}
    \date{
    \emph{\small Department of Electrical and Computer Engineering$^1$} \\
    \emph{\small Department of Operations Research and Financial Engineering$^2$} \\
    \emph{\small Center for Statistics and Machine Learning$^3$} \\
    \emph{\small Department of Computer Science$^4$} \\
    \emph{\small Princeton University}}
    \author{\small 
    Haimin Hu$^{1,}$\thanks{H.\ Hu and G. Dragotto contributed equally.}$\;\;$ \orcidA{} Gabriele Dragotto$^{2,3,\star}$ \orcidB{} Zixu Zhang$^1$ \orcidC{} Kaiqu Liang$^4$ \orcidD{} \\ \small Bartolomeo Stellato$^{1,2,3}$ \orcidE{} Jaime Fernández Fisac$^{1,3,4}$ \orcidF{}
    }
    \author{
        \IEEEauthorblockN{Haimin Hu\IEEEauthorrefmark{1}\textsuperscript{\textsection},
        Gabriele Dragotto\IEEEauthorrefmark{2}\textsuperscript{\textsection},
        Zixu Zhang\IEEEauthorrefmark{1},
        Kaiqu Liang\IEEEauthorrefmark{3},
        Bartolomeo Stellato\IEEEauthorrefmark{1}\IEEEauthorrefmark{2},
        Jaime Fernández Fisac\IEEEauthorrefmark{1}\IEEEauthorrefmark{3}
        }
        \IEEEauthorblockA{\IEEEauthorrefmark{1}Department of Electrical and Computer Engineering, Princeton University, United States
        }
        \IEEEauthorblockA{\IEEEauthorrefmark{2}Department of Operations Research and Financial Engineering, Princeton University, United States
        }
        \IEEEauthorblockA{\IEEEauthorrefmark{3}Department of Computer Science, Princeton University, United States
        }
        Email: \texttt{\{\href{mailto:haiminh@princeton.edu}{haiminh},
                         \href{mailto:gdragotto@princeton.edu}{gdragotto},
                         \href{mailto:zixuz@princeton.edu}{zixuz},
                         \href{mailto:kl2471@princeton.edu}{kl2471},
                         \href{mailto:bstellato@princeton.edu}{bstellato},
                         \href{mailto:jfisac@princeton.edu}{jfisac}\}@princeton.edu}
    }
\definecolor{porange}{HTML}{E77500} 
\definecolor{lime}{HTML}{A6CE39}
\DeclareRobustCommand{\orcidicon}{
    \hspace{-3mm}
	\begin{tikzpicture} 
    \draw[lime, fill=lime] (0,0) circle [radius=0.15] node[white] { 
        {\fontfamily{qag}\selectfont \tiny ID} 
    };
	\end{tikzpicture} 
    \hspace{-2mm}
}
\newtheorem{theorem}{Theorem}
\newtheorem{lemma}{Lemma}    
\newtheorem{remark}{Remark}
\newtheorem{example}{Example}    
\newtheorem{proposition}{Proposition}    
\newtheorem{assumption}{Assumption}    
\newtheorem{definition}{Definition}
\definecolor{porange}{HTML}{E77500} 
\newcommand{\jaime}[1]{\ifthenelse{\boolean{include-notes}}{\textcolor{orange}{\textbf{Jaime:} #1}}{}}
\newcommand{\haimin}[1]{\ifthenelse{\boolean{include-notes}}{\textcolor{magenta}{\textbf{Haimin:} #1}}{}}
\newcommand{\gab}[1]{\ifthenelse{\boolean{include-notes}}{\textcolor{cyan}{\textbf{Gabriele:} #1}}{}}
\newcommand{\bart}[1]{\ifthenelse{\boolean{include-notes}}{\textcolor{teal}{\textbf{Bartolomeo:} #1}}{}}
\newcommand{\remove}[1]{\ifthenelse{\boolean{include-remove}}{\textcolor{red}{\sout{#1}}}{}}
\newcommand{\new}[1]{\ifthenelse{\boolean{include-new}}{\textcolor{blue}{#1}}{#1}}
\newcommand{\todo}[1]{\ifthenelse{\boolean{include-notes}}{\textcolor{blue}{\textbf{TODO:} #1}}{}}
\newcommand{\p}[1]{\smallskip \noindent \textbf{{#1}.}}
\newcommand{\eg}{\emph{e.g.}}
\newcommand{\ie}{\emph{i.e.}}
\newcommand{\circledtext}[1]{\raisebox{.5pt}{\textcircled{\raisebox{-.9pt} {#1}}}}
\newcommand{\reals}{\mathbb{R}}
\DeclareMathOperator*{\argmin}{arg\,min}
\newcommand{\st}{\text{s.t.}}
\newcommand{\intvar}{z}
\newcommand{\nagents}{{N}}
\newcommand{\state}{{x}}
\newcommand{\bstate}{\bar{\state}}
\newcommand{\ctrl}{{u}}
\newcommand{\traj}{{\mathbf{x}}}
\newcommand{\btraj}{\bar{\traj}}
\newcommand{\ttraj}{\tilde{\traj}}
\newcommand{\cset}{{\mathcal{U}}}
\newcommand{\dyn}{{f}}
\newcommand{\policy}{{\gamma}}
\newcommand{\tpolicy}{\tilde{\policy}}
\newcommand{\policyset}{{\Gamma}}
\newcommand{\pset}{{\policyset}}
\newcommand{\tpolicyset}{\tilde{\policyset}}
\newcommand{\bpolicyset}{\bar{\policyset}}
\newcommand{\perm}{p}
\newcommand{\permnode}{\perm^\node}
\newcommand{\permset}{{P}}
\newcommand{\permincomplete}{{\tilde{p}}}
\newcommand{\permincompleteassigned}{\permincomplete_+}
\newcommand{\permincompleteunassigned}{\permincomplete_-}
\newcommand{\permorder}{{\rho}}
\newcommand{\permsetincomplete}{\tilde{P}}
\newcommand{\permmapping}{\Phi}
\newcommand{\orderset}{\mathcal{I}}
\newcommand{\nodelist}{\mathcal{S}}
\newcommand{\node}{s}
\newcommand{\resmap}{R}
\newcommand{\eqm}{\mathcal{E}}
\newcommand{\preset}{{\mathcal{L}}}
\newcommand{\sucset}{{\mathcal{M}}}
\newcommand{\cost}{{J}}
\newcommand{\tcost}{\tilde{\cost}}
\newcommand{\costsafe}{\cost_{\rm{safe}}}
\newcommand{\tcostsafe}{\tilde{\cost}_{\rm{safe}}}
\newcommand{\costindi}{\cost_{\rm{indv}}}
\newcommand{\costsocial}{{\mathbf{J}}}
\newcommand{\stagecost}{{g}}
\newcommand{\stagecostsafe}{{\ell}}
\newcommand{\stagecostper}{{\bar{g}}}
\crefname{algocf}{Line}{Lines}
\crefname{lemma}{Lemma}{Lemmata}
\crefname{theorem}{Theorem}{Theorems}
\crefname{proposition}{Proposition}{Propositions}
\crefname{algorithm}{Algorithm}{Algorithms}
\crefname{equation}{}{}
\crefname{definition}{Definition}{Definition}
\crefname{claim}{Claim}{Claim}
\crefname{corollary}{Corollary}{Corollaries}
\crefname{remark}{Remark}{Remarks}
\crefname{example}{Example}{Examples}
\crefname{figure}{Figure}{Figures}
\crefname{section}{Section}{Sections}
\crefname{table}{Table}{Tables}
\newglossaryentry{LSE}
{
  name={LSE},
  plural={LSEs},
  description={local Stackelberg equilibrium},
  first={local Stackelberg equilibrium (\glsentrytext{LSE})},
  descriptionplural={local Stackelberg equilibria},
  firstplural={local Stackelberg equilibria (\glsentryplural{LSE})}
}
\newglossaryentry{GSE}
{
  name={GSE},
  description={global Stackelberg equilibrium},
  first={global Stackelberg equilibrium (\glsentrytext{GSE})}
}
\newglossaryentry{BNP}
{
  name={B\&P},
  description={Branch-and-Play},
  first={Branch-and-Play (\glsentrytext{BNP})},
}
\newglossaryentry{LQ}
{
  name={LQ},
  description={linear quadratic},
  first={linear quadratic (\glsentrytext{LQ})},
}
\newglossaryentry{ILQR}
{
  name={ILQR},
  description={iterative linear quadratic regulator},
  first={iterative linear quadratic regulator (\glsentrytext{ILQR})},
}
\newglossaryentry{ATC}
{
  name={ATC},
  description={air traffic control},
  first={air traffic control (\glsentrytext{ATC})},
}
\newglossaryentry{STP}
{
  name={STP},
  description={sequential trajectory planning},
  first={sequential trajectory planning (\glsentrytext{STP})},
}
\newglossaryentry{FCFS}
{
  name={FCFS},
  description={first-come-first-served},
  first={first-come-first-served (\glsentrytext{FCFS})},
}
\newglossaryentry{MAPF}
{
  name={MAPF},
  description={multi-agent pathfinding},
  first={Multi-agent pathfinding (\glsentrytext{MAPF})},
}
\newglossaryentry{MPC}
{
  name={MPC},
  description={model predictive control},
  first={model predictive control (\glsentrytext{MPC})},
}
\begin{document}
\title{Who Plays First? Optimizing the Order of Play in Stackelberg Games with Many Robots}

\makeatletter
\let\@oldmaketitle\@maketitle
\renewcommand{\@maketitle}{\@oldmaketitle
\centering
\includegraphics[width=\textwidth]{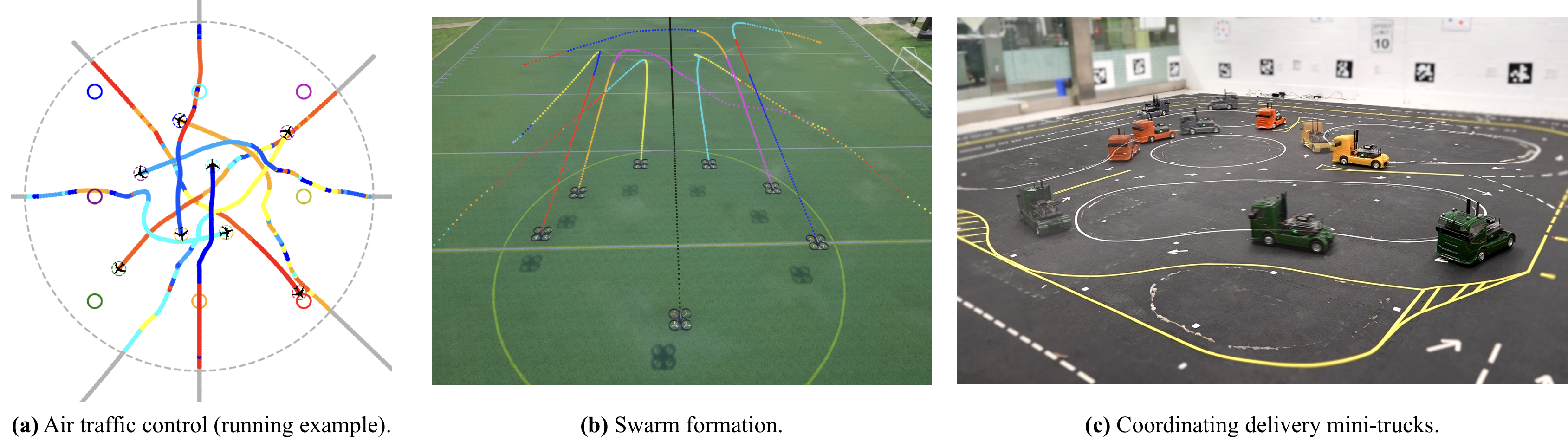}
\setcounter{figure}{0}
\vspace{-0.2in}
\captionof{figure}{
Our game-theoretic planning method computes the socially optimal Stackelberg equilibrium in real time.
\textbf{(a)}
Starting from an air traffic control zone with eight airplanes flying on collision courses, our method computes collision-free and socially optimal trajectories (warmer color denotes higher priority) compared to the baselines.
\textbf{(b)}
Our method handles moving targets when applied for a quadrotor swarm formation task in the AirSim simulator~\cite{shah2018airsim}.
\textbf{(c)}
Our method coordinates a delivery vehicle fleet in a scaled metropolitan area (vehicle snapshots corresponding to later time steps have higher transparency).
The video is available at~\url{https://youtu.be/wb6cMYJ43-s}
}
\label{fig:front}
\vspace{-0.2in}
\bigskip}

\makeatother
\maketitle
\begingroup\renewcommand\thefootnote{\textsection}
\footnotetext{H. Hu and G. Dragotto contributed equally.}
\endgroup

\begin{abstract}
We consider the multi-agent spatial navigation problem of computing the socially optimal order of play, \ie, the sequence in which the agents commit to their decisions, and its associated equilibrium in an $N$-player Stackelberg trajectory game. We model this problem as a mixed-integer optimization problem over the space of all possible Stackelberg games associated with the order of play's permutations.
To solve the problem, we introduce Branch and Play (B\&P), an efficient and exact algorithm that provably converges to a socially optimal order of play and its Stackelberg equilibrium.
As a subroutine for B\&P, we employ and extend sequential trajectory planning, \ie, a popular multi-agent control approach, to scalably compute valid local Stackelberg equilibria for any given order of play.
We demonstrate the practical utility of B\&P to coordinate air traffic control, swarm formation, and delivery vehicle fleets.
We find that B\&P consistently outperforms various baselines, and computes the socially optimal equilibrium.
\end{abstract}

\iftoggle{preprint}{}{
    \IEEEpeerreviewmaketitle
}

\section{Introduction}
\label{sec:intro}
On Monday morning, three aircraft are flying on a collision course. How should the air traffic controller redirect the airplanes to avoid collisions while minimizing their eventual delays? Meanwhile, a fleet of automated heavy-duty and slow-moving delivery trucks are finishing their tasks across a congested road network. Which route should the trucks be allowed to take so that it minimizes their impact on traffic while also minimizing their fuel consumption? 

The previous examples share three critical attributes. First, the outcome of the interaction is contingent on \textit{the order of play}, \ie, the sequence in which the agents commit to their decisions. Second, the systems are safety-critical and are subject to the oversight of an external \textit{regulator}. Third, the agents are self-interested and may have conflicting preferences and different information; these factors limit their willingness to completely delegate the decision-making authority to the regulator.
These three attributes are particularly evident in scenarios such as \gls{ATC}, delivery logistics, and groups of autonomous robots operating in close proximity. In these scenarios, the order of play crucially affects the operational effectiveness of the group as a whole.
Therefore, a key challenge for the regulator is determining an optimal order of play that is socially optimal, \ie, that maximizes the sum of the agents' utilities. Naturally, as the number of agents increases, the problem becomes combinatorially complex and computationally intractable.  This complexity has tangible repercussions in domains where safety and reliability are non-negotiable, and where a high-level regulator has oversight responsibilities over the collective outcome and partial authority over the participating agents.

Non-cooperative game theory provides principled frameworks to model safety-critical systems where several self-interested agents interact.
Existing approaches
often fail to scale to many agents, and are capable of addressing bilevel or trilevel scenarios at most,
rendering them impractical for most applications involving more than \new{three} agents%
~\cite{fisac2019hierarchical, sadigh2018planning, zhao2022stackelberg, tian2021safety}.
Other methods often resort to heuristic strategies to gain computational tractability. For instance, the “first come first served” rule~\cite{chen2015safe} 
proposed for air traffic control
offers a
solution that, while effective in certain contexts, lacks the flexibility and optimality necessary for broader applications. 
Those heuristic-based solutions ultimately compromise the potential to achieve the socially optimal outcome. Importantly, unlike the existing approaches, we do not assume that the order of play is given.

\p{Contributions} To overcome these limitations, we propose \gls{BNP}, an efficient and exact branch-and-bound method to compute a socially optimal order of play. Specifically,  we focus on sequential trajectory planning tasks formulated as $\nagents$-player trajectory Stackelberg games, \ie, sequential games where $\nagents$ sequentially plan their trajectories.
\gls{BNP} is an iterative method that implicitly explores the search space with the hope of avoiding the costly enumeration of all the possible orders of play. 
We contribute to both the science and the systems aspects of socially-aware game-theoretic planning as follows:
\begin{enumerate}[(a.)]
    \item \textbf{Science.} We formulate the problem of finding the optimal order of play as a mixed-integer optimization problem. Algorithmically, we propose~\gls{BNP}, a novel method that implicitly enumerates the search space over possible leader--follower orderings and is guaranteed to find the socially optimal Stackelberg equilibrium for general non-cooperative trajectory games. 
    In games with aligned interaction preferences (\eg, collision avoidance), we prove for the first time that \gls{STP}, a popular multi-agent control approach, produces a \gls{LSE} in a single pass.
    \item \textbf{Systems.}  We deploy a real-time \gls{BNP} for simulated air traffic control and quadrotor swarm formation, and we perform hardware experiments for the coordination of a delivery vehicle fleet in a scaled metropolis.
    We evaluate our solution against several baseline approaches (\eg, first-come-first-served and Nash solutions) and we investigate their quantitative and qualitative differences. Our computational tests show that \gls{BNP} consistently outperforms the baselines and finds socially optimal equilibria.
\end{enumerate}

\p{Overview}
We organize this paper as follows.
\autoref{sec:related_work} discusses the related works, while 
\autoref{sec:formulation} introduces the problem formulation. 
\autoref{sec:wpf} presents our key contribution, the \gls{BNP} algorithm, and its properties.
\autoref{sec:stp} discusses how to compute a Stackelberg equilibrium using \gls{STP}. ~\autoref{sec:result} presents our experiments, and ~\autoref{sec:conclusion} and ~\autoref{sec:limitation} discuss the limitations, future directions and conclusions.

\begin{figure*}[!hbtp]
    \centering
    \includegraphics[width=\iftoggle{preprint}{\textwidth}{1.6\columnwidth}]{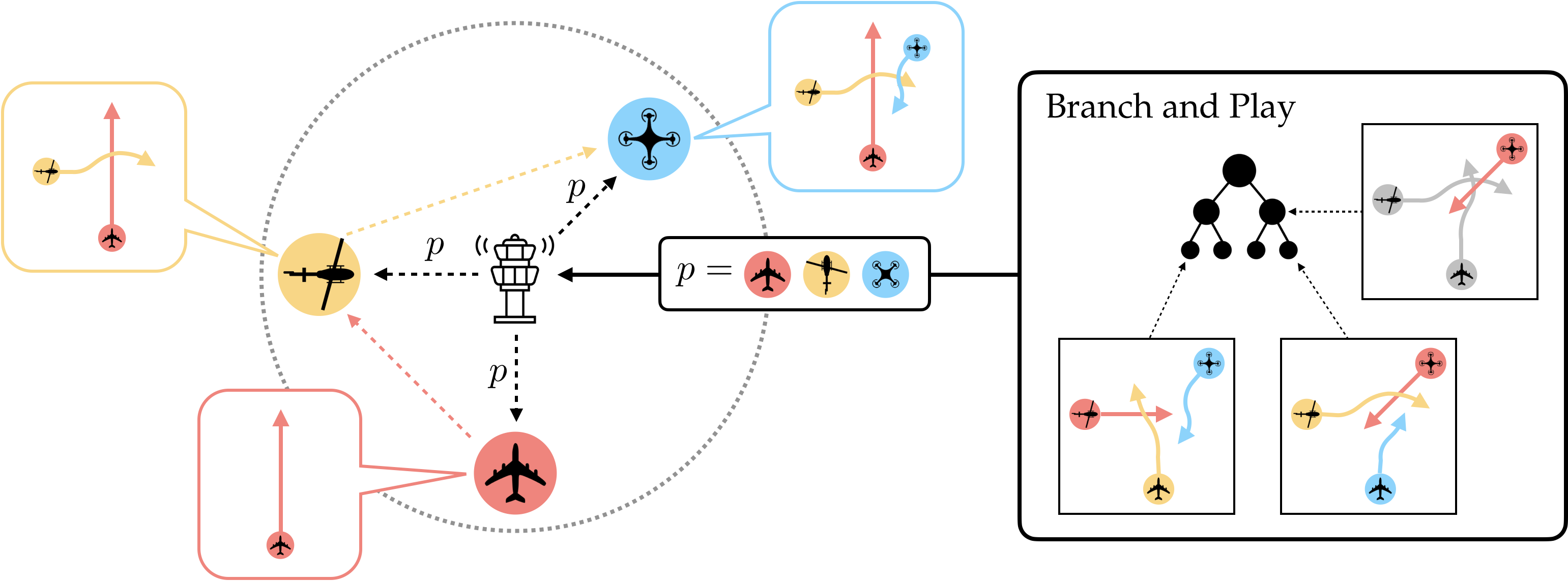}
    \caption{An overview of \gls{BNP} applied to air traffic control, where we employ \gls{STP} as subgame solver. \gls{BNP} computes the socially optimal order of play (the jet is the leader, followed by the helicopter, and finally the quadrotor) and broadcasts it to all airborne agents in the \gls{ATC} zone.
    Once the regulator broadcasts the order of play, \gls{STP} conditioned on the socially optimal order of play runs at a higher frequency: Each agent optimizes its own trajectory based on predecessors' plan, and communicates this information with its successors.}
    \label{fig:framework}
\end{figure*}

\vspace{0.4cm}
\section{Related Work}
\label{sec:related_work}

\subsection{Game-Theoretic Planning}
Non-cooperative (dynamic) trajectory games%
~\cite{bacsar1998dynamic} are a powerful tool to model multi-agent coordination tasks, for instance, autonomous driving~\cite{sadigh2018planning,zanardi2021urban,hu2023active}, manipulation~\cite{zhao2022stackelberg}, multi-robot navigation~\cite{zhao2023stackelberg}, physical human-robot interaction~\cite{li2019differential}, power networks~\cite{hu2020non}, and space missions~\cite{palafox2023learning}. 
Nash~\cite{nash1951non} and Stackelberg~\cite{stackelberg1934} equilibria
are popular solution concepts for such games.
However, their computation often poses several practical challenges.

\p{Nash games} Computing Nash equilibria is often challenging due to the inherently non-convex nature of many games, \eg, games with general dynamics and cost functions. However, in some specific cases, as in the \gls{LQ} setting, there are efficient analytical methods to compute Nash equilibria ~\cite[Ch.6]{bacsar1998dynamic}.
Fridovich et al.~\cite{fridovich2020efficient} leverage this idea by building an iterative \gls{LQ} approximation scheme to compute approximate Nash equilibria of unconstrained, non-convex trajectory games.
Laine et al.~\cite{laine2023computation} provide necessary and sufficient conditions for the existence of generalized Nash equilibria accounting for hard constraints, and propose an algorithm to compute approximate equilibria.
Recently, Nash games are also studied in stochastic settings~\cite{so2023mpogames,mehr2023maximum,hu2023deception,lidard2024blending}.
However, Nash games cannot efficiently model scenarios where decisions are sequential and the information available to the players is asymmetric and time-dependent.

\p{Stackelberg games}
Stackelberg games are sequential games played in rounds. Their solutions, Stackelberg equilibria, are often the solution of a hierarchical (\ie, nested) optimization problem, and unlike their Nash counterpart, they are explicitly parameterized by the (given) order of play.
This feature makes Stackelberg games suitable for modeling a wide range of multi-agent interaction scenarios where a leader (\ie, the first mover) plays before a set of followers. For instance, which vehicle should stop and yield to others at a traffic intersection; which aircraft should change its altitude when multiple aircraft are flying on a collision course; or which robot should make the first move when collaborating with other autonomous agents.
Zhao et al.~\cite{zhao2022stackelberg} model collaborative manipulation as a Stackelberg trajectory game and propose an optimization-based approach to find a Stackelberg equilibrium.
Tian et al.~\cite{tian2021safety} employ Bayesian inference to estimate who is the leader (or follower) in a Stackelberg trajectory game.
Khan and Fridovich-Keil~\cite{khan2023leadership} exploit a similar idea to solve Stackelberg trajectory games online with \gls{LQ} approximations. 
Previous works, however, are mostly limited to 2-player setting and assume that the order of play is fixed. In addition, an external regulator often oversees the players' actions, \eg, as in air traffic control, autonomous taxis, and warehouse robots. In this setting, the order of play becomes a natural ``tuning knob'' the regulator optimizes to improve the social outcome, \eg, by reducing traffic congestion.
Unfortunately, the presence of the regulator adds a new level of difficulty to the problem, as the number of possible permutations of the order of play scales factorially with the number of players. In this paper, we precisely address the setting where the regulator has to determine the socially optimal order of play inducing a Stackelberg equilibrium of an $N$-robot ($N > 2$) Stackelberg trajectory game.

\subsection{Multilevel Optimization for Stackelberg Games}
In optimization, Stackelberg games are often solved via bilevel (or, in general, multilevel) optimization techniques \cite{kleinert2021survey,beck2021gentle,dempe2020bilevel}.
A bilevel problem is an optimization problem whose constraint includes another optimization problem, \ie, a nested lower-level problem. These bilevel formulations are extremely useful in practical contexts such as pricing~\cite{brotcorne2008joint}, regulation of complex energy markets~\cite{wnms}, the protection of critical infrastructure \cite{baggio2021multilevel,dcrit2023}, and collaborative manipulation~\cite{zhao2022stackelberg}. However, the complexity of computing a solution to the associated multi-level optimization problem rises up by one layer in the polynomial hierarchy of computational complexity for each round of decisions \cite{jeroslow1985polynomial}. For instance, determining the Stackelberg equilibrium where $2$ players solve a linear optimization problem is already $\mathcal{NP}$-hard. In this paper, we employ and extend \gls{STP} to efficiently compute, for any given order of play, an \gls{LSE} of an $\nagents$-player Stackelberg trajectory game with $\nagents>2$.

\subsection{Multi-Robot Trajectory Planning}
\new{Our work is closely related to both established literature and recent advances in multi-robot trajectory planning methods.}

\p{Multi-agent pathfinding}
\new{\gls{MAPF} aims at finding collision-free and goal-reaching trajectories for a group of robots.
Unlike game-theoretic planning methods that optimize robot states and actions in continuous spaces, most \gls{MAPF} approaches find collision-free paths on a grid or a graph with discretized states or actions.
Early work~\cite{hopcroft1984complexity} shows that centralized \gls{MAPF} on a grid is PSPACE-hard, hence generally computationally intractable.
In contrast to these methods, decoupled or distributed approaches break down the task into independent or loosely-interdependent problems for each robot. Local Repair A*~\cite{zelinsky1992mobile} extends the classical single-agent A* algorithm~\cite{hart1968formal} to the \gls{MAPF} setting by iteratively rerouting agent paths to resolve conflicts but is prone to computation bottlenecks such as cycles.
Silver et al.~\cite{silver2005cooperative} designed a class of Cooperative A* algorithms, which mitigate the cycling issue by accounting for other robot plans in each agent's pathfinding.
Other open challenges and variants of \gls{MAPF} problems are extensively summarized in~\cite{stern2019multi,salzman2020research}.
As pointed out in~\cite{silver2005cooperative}, a critical factor influencing the performance of cooperative pathfinding algorithms is the agents' ordering, which the authors suggest should be dynamically adjusted.
Our proposed \gls{BNP} algorithm is specifically designed to compute the (time-varying) ordering, a key factor in optimizing group efficiency for multi-robot trajectory planning tasks. This novel approach can be seamlessly integrated with existing \gls{MAPF} methods, promising a significant boost in their performance.
Moreover, our framework considers general non-cooperative trajectory games, and it is capable of modeling interactions beyond typical \gls{MAPF} formulations.
}

\p{Prioritized planning}
\new{
\gls{STP} shares conceptual similarities with prioritized planning, an efficient class of \gls{MAPF} algorithms (see, \eg, ~\cite{erdmann1987multiple,latombe1991robot,fujimura1991motion}).
In prioritized planning, each robot has a unique priority, and the algorithm advances sequentially from the robot with the highest priority to the one with the lowest priority, with the rule that each robot avoids its higher-priority peers.
Čáp et al.~\cite{vcap2015prioritized} studied prioritized planning assuming a 2D workspace and showed that the classical prioritized planning approach~\cite{erdmann1987multiple} is incomplete, \ie, it cannot guarantee a conflict-free solution even if one exists.
They developed a revised version of prioritized planning and proved that it guarantees completeness for a subset of planar pathfinding problems.
In general, \gls{MAPF} problems solvable with prioritized planning can be viewed as a special case of a Stackelberg trajectory game, where the only coupling between each pair of agents is avoiding collisions.
While ensuring completeness at the trajectory level is beyond the scope of this paper, our \gls{STP}-based Stackelberg game solver may be integrated with prioritized planning approaches such as~\cite{vcap2015prioritized} to guarantee recursive feasibility in planar \gls{MAPF} problems.
}

\begin{figure*}[!hbtp]
    \centering
    \includegraphics[width=\iftoggle{preprint}{\textwidth}{1.7\columnwidth}]{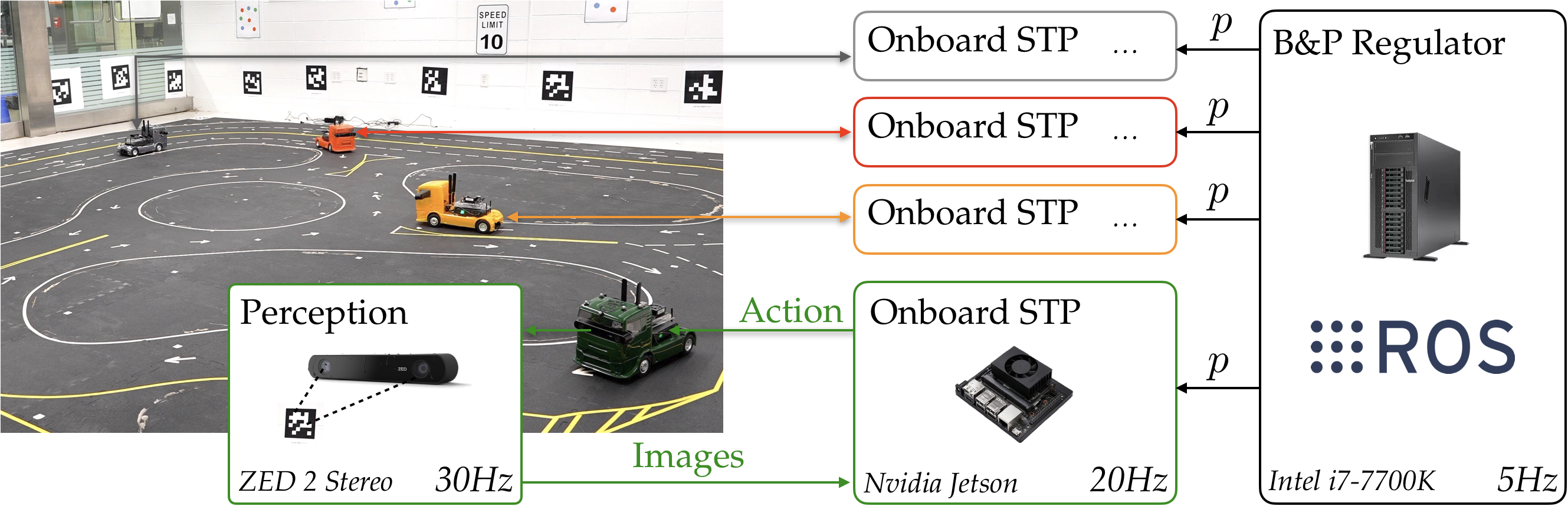}
    \caption{A ROS-based implementation of \gls{BNP} for coordinating a delivery vehicle fleet in a scaled metropolis. Each truck performs onboard computation with an Nvidia Jetson Xavier NX computer, which runs a visual-inertial SLAM algorithm for localization. The computer also solves the \gls{STP} for the ego vehicle's actions (acceleration and steering angle) based on leading vehicles' planned trajectories communicated wirelessly. The \gls{BNP} is solved on a desktop and the optimal permutation $\perm$ is then broadcast to each truck.}
    \label{fig:system}
\end{figure*}

\p{Formation control}
\new{A challenging subclass of \gls{MAPF} problems is formation control, which concerns navigating a group of robots, such as quadrotors or vehicles, in a desired geometric configuration while avoiding collisions.
Early work predominantly focuses on formation maintenance and stabilization using control-theoretic approaches, \eg, Lyapunov methods~\cite{ogren2001control}, distributed \gls{MPC}~\cite{dunbar2006distributed}, and distance-based flocking control~\cite{dimarogonas2008stability}.
Alonso-Mora et al.~\cite{alonso2017multi} achieve collision-free formation control in dynamic environments by adjusting formation parameters, such as size and orientation, within a constrained optimization framework.
Zhou et al.~\cite{zhou2022swarm} leverages spatial-temporal trajectory planning to enable formation navigation of drone swarms in the wild.
In this paper, we also apply \gls{BNP} to a quadrotor formation and spatial navigation task and demonstrate its superior time efficiency compared to a baseline method that plans with a fixed ordering.
Note that our approach applies to collaborative tasks, such as formation control and cooperative pathfinding, while addressing, more generally, \textit{non-cooperative} game-theoretic interactions involving self-interested agents.
}

\section{Problem Formulation}
\label{sec:formulation}

\p{Notation} Following standard game theory notation, let $(\cdot{})^{-i}$ be $(\cdot{})$ except for its $i$-th element.
Similarly, we use the subscript $t$ to refer to the time $t \in [T] := \{0,\dots, T-1\}$. For each player $i \in \orderset := \{1,\dots,N\}$, we let $\preset^i := \{1,\ldots,i-1\}$ and $\sucset^i := \{i+1,\ldots,N\}$ be the predecessor and successor set of $i$, respectively.
Let $\permset$ be the set containing all permutations of the order of play for $N$ players.

\p{System dynamics}
We consider an $N$-player $N$-hierarchy discrete-time (sequential) trajectory game governed by the nonlinear system
\begin{equation}
\label{eq:dyn_sys}
\state_{t+1} = \dyn_t(\state_t, \ctrl_t),
\end{equation}
where $\state = (\state^1,\ldots,\state^N) \in \reals^{n}$ and $\ctrl=(\ctrl^1,\dots,\ctrl^N)$ are the system's state and controls, and $\ctrl^i \in \cset^i \subseteq \reals^{m_i}$.

\p{Objectives and strategies}
We model each player $i$ as a rational decision maker, who minimizes a cost function
\begin{equation}
\label{eq:game}
\cost^i(\policy) = \textstyle\sum_{k=0}^{T} \stagecost^i_{k}(\state_k, \ctrl_k^{i}),
\end{equation}
where $\stagecost^i_{k}(\cdot)$ is player $i$'s stage cost function at time $k$.
The control action $\ctrl_k^{i}$ at time $k$ is given by  a \emph{strategy} $\policy^i: \reals^{n} \times [0,T] \rightarrow \cset^i$.
The class of available strategies $\policy^i\in\pset^i$ can encode
different \emph{information structures}, \eg, open-loop if $\ctrl^i_t = \policy^i_t(x_0)$ or feedback if $\ctrl^i_t = \policy^i_t(x_t)$. Given a strategy profile $\policy$, its social cost is $\costsocial(\policy):=\sum_{i \in \orderset}  \cost^i(\policy)$

\p{Cost separation by interaction}
In this paper, we focus on a class of trajectory games where \emph{interactions} solely involve collision-avoidance specifications. This class of game captures a wide range of non-cooperative multi-robot motion planning tasks.
Specifically, we assume each player's stage cost has an additive structure
\begin{equation}
    \label{eq:cost_assump}
    \stagecost^i_t(\state_t, \ctrl^i_t) = \stagecostper^i_t(\state^i_t, \ctrl^i_t) + \stagecostsafe_t^{i}(\state_t),
\end{equation}
where $\stagecostper^i_t(\state^i_t, \ctrl^i_t)$ is the \emph{individual} cost depending only on player $i$'s state and control, and $\stagecostsafe_t^{i}(\state_t) := \textstyle\sum_{j \in \{-i\}} \stagecostsafe^{ij}_t(\state^i_t, \state^j_t)$ is the \emph{interactive safety} cost penalizing collisions between players $i$ and all other players $j \in \{-i\}$. We also assume the pairwise safety cost is \emph{symmetric}, \ie,  $\stagecostsafe^{ij}_t(\state^i_t, \state^j_t) = \stagecostsafe^{ji}_t(\state^j_t, \state^i_t)$.
Therefore, we can let $\cost^i(\policy) = \costindi^i(\policy) + \costsafe^i(\policy)$ where $\costindi^i(\policy) := \sum_{k=0}^{T} \stagecostper^i_k(\state^i_k, \ctrl^i_k)$ and $\costsafe^i(\policy) := \sum_{j \in \{-i\}} \sum_{k=0}^{T} \stagecostsafe^{ij}_k(\state^i_k, \state^j_k)$.
In this paper, we consider \emph{Stackelberg equilibria} as our solution concept, which is formalized in \autoref{def:GSE}.

\begin{definition}[Global Stackelberg equilibrium]
\label{def:GSE}
\new{The strategy profile $\policy^* := (\policy^{1,*},\ldots,\policy^{N,*})$ is the \gls{GSE} of trajectory game \eqref{eq:game} if for each player $i$, the associated cost with all predecessors and $i$ playing the \gls{GSE} is no worse than the cost obtained when all predecessors playing the \gls{GSE} and $i$ playing any other strategy than $\policy^{i,*}$.
Formally, $\policy^*$ is the \gls{GSE} if for all $i \in \orderset$,
\begin{equation}
\begin{aligned}
\label{eq:GSE}
    \sup_{\policy^{\succ i} \in \resmap^{\succ i}(\policy^{\preceq i,*})} &\cost^i({\policy}^{\preceq i,*}, \policy^{\succ i}) \leq \\
    &\sup_{\tpolicy^{\succ i} \in \resmap^{\succ i}({\policy}^{\prec i,*}, \tpolicy^{i})} \cost^i({\policy}^{\prec i,*}, \tpolicy^{i}, \tpolicy^{\succ i}),
\end{aligned}
\end{equation}
for all $\tpolicy^i \in \pset^i$, where $\resmap^{\succ i}: \policyset^{\prec i} \times \policyset^i \rightrightarrows \policyset^{\succ i}$ is the set-valued \emph{optimal response map} of player $i$. Specifically, $\resmap^{\succ i}$ can be recursively defined from the last player as
\begin{equation}
\begin{aligned}
\label{eq:GSE:opt_res}
    \resmap^{\succ i} (\policy^{\preceq i}) &:= \bigcup_{\policy^{j} \in \resmap^{j}(\policy^{\preceq i})} \{\policy^{j}\} \times \resmap^{\succ j}(\policy^{\preceq j}),\\
    \resmap^{j}(\policy^{\preceq i}) &:= \left\{\policy^{j} \in \tpolicyset^{j} \mid \forall \tpolicy^{j} \in \tpolicyset^{j}: \right.\\
    \sup_{\policy^{\succ j} \in \resmap^{\succ j}} &\cost^{j}(\policy^{\prec j}, \policy^{j}, \policy^{\succ j}) \leq \\
    &\left.\sup_{\tpolicy^{\succ j} \in \resmap^{\succ j}} \cost^{j}(\policy^{\prec j}, \tpolicy^{j}, \tpolicy^{\succ j}) \right\}.
\end{aligned}
\end{equation}
where $j$ denotes the next player after $i$ in the order of play.
In \eqref{eq:GSE:opt_res}, the $\sup$ operation is used to guarantee that the \gls{GSE} is unique by enforcing that each player assumes the worst-case realization of its followers' choices within their joint optimal response set.}
\end{definition}

\new{In practice, computing the \gls{GSE} of a general non-cooperative trajectory game is intractable for all but the simplest cases.
Therefore, in the same spirit of the recent efforts on solving Stackelberg games in real-time~\cite{zhao2022stackelberg,khan2023leadership,li2024computation}, we aim at providing a best-effort approximation of the \gls{GSE} through a \textit{local} Stackelberg equilibrium.}

\begin{definition}[Local Stackelberg equilibrium]
\label{def:LSE}
\new{The strategy profile $\policy^* := ({\policy}^{1,*},\ldots,{\policy}^{N,*})$ is an \gls{LSE} of \eqref{eq:game} if there exists an open neighbourhood $\bpolicyset({\policy^*}) \subseteq \policyset$ such that $\policy^* \in \bpolicyset({\policy^*})$ and for each player $i \in \orderset$, \eqref{eq:GSE} holds for all $\tpolicy^i \in \bpolicyset^i({\policy^*})$.}
\end{definition}

\begin{remark}
    \new{In general, the solution quality (in terms of the social cost) of an \gls{LSE} depends on two factors: the players' order of play and the Stackelberg game solver. In this paper, we focus on computing the socially optimal order of play using \gls{BNP}, subject to the local optimality associated with an \gls{STP}-based game solver.}
\end{remark}

As per \autoref{def:LSE}, the players' order of play intrinsically parameterizes an~\gls{LSE}. Therefore, the key question we would like to answer is: \emph{How can we efficiently find the socially optimal order of play and its associated \gls{LSE} strategy?}
In \autoref{def:order}, we formalize the concept of permutation and its one-to-one correspondence to an order of play.

\begin{definition}[Permutation and order of play]
    A \emph{permutation} is a tuple $\perm := (\perm_1, \perm_2, \ldots, \perm_N) \in \permset$ with non-repeating elements $\perm_i \in \orderset$, each indicating player $i$'s \emph{order of play} in the Stackelberg game, e.g., $\perm = (3,1,2)$ means player 3 is the leader, followed by player 1, and player 2 plays the last.
    \label{def:order}
\end{definition}

Since $\policy$ is always parametrized by a permutation $\perm$, we will employ the notation $\policy(\perm)$ to refer to the strategy $\policy$ given $\perm$.
Furthermore, in \autoref{def:optimal}, we define the (an) optimal permutation and its associated permutation.
\begin{definition}[socially optimal equilibrium]
    A strategy profile $\policy(\perm)$ with permutation $\perm$ is socially optimal if and only if $\perm \in \argmin_{\tilde{\perm} \in \permset} \sum_{i \in \orderset} \cost^i(\policy(\tilde{\perm}))$.
    \label{def:optimal}
\end{definition}

\begin{example}
We illustrate our approach with an air traffic control (ATC) example involving eight airplanes initially flying on collision courses (\autoref{fig:front} (a)).
The dynamic model of each airplane can be found in Appendix~\ref{apdx:model}.
The \gls{ATC} zone is a circular region centered at $(0,0)$ with a radius of 2.5 (the unit is abstract).
Outside the \gls{ATC} zone, an aircraft ignores other agents and uses an \gls{ILQR} planner~\cite{li2004iterative} to reach its target.
The individual cost $\stagecostper^i_t(\state^i_t, \ctrl^i_t)$ encodes tracking the target state and regularizing control inputs, and safety cost $\stagecostsafe^{ij}_t(\state^i_t, \state^j_t)$ penalizes collisions. We say that two airplanes $i$ and $j$ are in collision at time $t$ if $\|p^i_t - p^j_t\| \leq 0.2$.
A commonly adopted \gls{ATC} strategy is \gls{FCFS}~\cite{chen2015safe}, which assigns a higher priority to an airplane that enters the zone earlier.
We will show that the dynamic assignments of the order of play of \gls{BNP} improve the airplane's coordination and yield a better social outcome.
\end{example}

\p{Mixed-integer formulation}
We formulate the problem of finding a \new{socially optimal Stackelberg equilibrium} as the 
\new{mixed-integer optimization problem}

\begin{subequations}
\label{eq:mio}
\begin{align}
    \min_{\{\intvar^\perm, \policy (\perm)\}_{\perm \in \permset} }~&\costsocial(\policy) := \sum_{i \in \orderset} \cost^i(\policy)  \label{eq:mio:objective}\\
    \st \quad\;\;\;&\state_{t+1} = \dyn_t(\state_t, \policy_{t}(\state_t)),  &&\forall t \in [T], \label{eq:mio:dynamics}\\
    &\policy_t = \sum_{\perm \in \permset} \intvar^\perm \bar{\policy}_t(\perm), &&\forall t \in [T],  \label{eq:mio:equality}\\
    & \bar{\policy}(\perm) \in \eqm(\perm), &&\forall \perm \in \permset, \label{eq:mio:equilibria}\\
    &\intvar^\perm \in \{0,1\},~\quad \sum_{\perm \in \permset} \intvar^\perm = 1,  &&\forall \perm \in \permset.  \label{eq:mio:binary}
\end{align}
\end{subequations}
For each permutation $\perm \in \permset$, we introduce a binary variable $\intvar^\perm$ that equals $1$ if and only if $\perm$ is the optimal permutation, and the set of variables $\policy(\perm)$ associated with the strategy profile resulting from $\perm$. In \cref{eq:mio:dynamics}, we enforce the system's dynamics, while in \cref{eq:mio:equality} we define the optimal policy $\policy$ as a $0-1$ combination of $\policy(\perm)$ for each $\perm$. In \cref{eq:mio:equilibria}, we require $\policy(\perm)$ to be in 
where $\eqm(\perm)$, \ie, the set of \gls{LSE} policies under $\perm$. In \cref{eq:mio:binary}, we require that only one permutation $\perm$ is active via the $\intvar^\perm$ variable.
Finally, without loss of generality, in \cref{eq:mio:objective}, we minimize the sum of players' costs.
Naturally, solving \cref{eq:mio} is at least $\mathcal{NP}$-hard even if we assume $\eqm(\perm)$ is given for any $\perm$.

\begin{remark}
    \new{Mixed-integer optimization problem}~\eqref{eq:mio} supports different objectives $\costsocial$ encoding other specifications than socially optimal outcomes, \eg, a weighted sum of players' costs $\costsocial(\policy) := \sum_{i \in \orderset} \alpha_i \cost^i(\policy)$, $\alpha_i > 0$, or the worst-case cost $\costsocial(\policy) := \max_{i \in \orderset} \cost^i(\policy)$.
    Furthermore, our algorithm minimally requires that $\costsocial$ can be evaluated efficiently given $\policy$ and $\state_0$. We do not assume continuity and differentiability of $\costsocial$.
\end{remark}

\section{Branch and Play}
\label{sec:wpf}

Although \cref{eq:mio} provides an explicit and exact formulation to determine a \new{socially optimal Stackelberg equilibrium}, its solution poses considerable practical challenges.
First, for any given $\perm \in \permset$, we need to efficiently and scalably compute an $\nagents$-player \gls{LSE} $\policy(\perm)$ for the associated trajectory game. The latter often includes nonlinear system dynamics~\eqref{eq:dyn_sys} and nonconvex cost functions~\eqref{eq:game}. Therefore, even for a single $\perm$, determining an \gls{LSE} often requires well-crafted computational methods.
Second, suppose we have access to an oracle (\eg, a game solver) to efficiently compute the equilibria $\policy(\perm) \in \eqm(\perm)$ under any $\perm$. 
As there are $\nagents!$ permutations of the players, there are factorially-many constraints \cref{eq:mio:equilibria} associated with each player. Therefore, the explicit enumeration of all the orders of play scales factorially with the number of players. 

Because of these practical challenges, we introduce an implicit enumeration scheme to possibly avoid the expensive computation of $\nagents!$ \glspl{LSE} associated with each permutation of the order of play. 
Specifically, we propose \gls{BNP}, a novel algorithm to efficiently determine a \new{socially optimal Stackelberg equilibrium} and its associated optimal order of play. In practice, \gls{BNP} implicitly enumerates the search space of  \cref{eq:mio} via a branch-and-bound \citep{land_automatic_1960} tree and a hierarchy of bounds associated with each \gls{LSE}. Each node of the \gls{BNP} search tree is associated with a (partial) permutation of the order of play, and a corresponding trajectory planning problem. To solve the latter, we employ \gls{STP}~\cite{chen2015safe,chen2018robust}, a distributed control framework for multi-agent trajectory optimization. Given a permutation $\perm$ of the order of play, we also prove, for the first time, that \gls{STP} yields an \gls{LSE} of the trajectory game~\eqref{eq:game}.
Without loss of generality, in this section, we assume access to a subgame solver that returns an \gls{LSE} associated with $\perm \in \permset$ and the corresponding Stackelberg game.
We illustrate our framework in~\autoref{fig:framework} (\gls{ATC}) and~\autoref{fig:system} (truck coordination).

\subsection{Incomplete Permutations and Their Bounds.}
\label{sec:wpf:incomplete_perm}
The implicit enumeration of \gls{BNP} requires us to enforce a hierarchy of bounds (or values) for each permutation $\perm \in \permset$. This is equivalent to computing the cost associated with an \gls{LSE} $\policy(\perm)$ associated with $\perm$.
We therefore define the concept of the value of a permutation.

\begin{definition}[Value of a permutation]
\label{def:bound}
The \emph{value} associated with a permutation $\perm$ is $\costsocial(\policy(\perm))$, where $\policy(\perm)$ is an \gls{LSE} of the Stackelberg game parametrized by $\perm$.
\end{definition}

However, as there are factorially many orders of play, bounding all the permutations is rather expensive. Therefore, with the hope of reducing the calls to the \gls{LSE} oracle, we devise the concept of \emph{incomplete permutations}, \ie, partial assignments of the orders of play.

\begin{definition}[Incomplete permutation]
An \emph{incomplete permutation} $\permincomplete = (\permincompleteassigned, \permincompleteunassigned)$ of order $\permorder$ is a permutation where $\permincompleteassigned$ contains the first $\permorder\le \nagents$ orders of play, and $\permincompleteunassigned$ contains the remaining unassigned player indices.
\label{def:incomplete}
\end{definition}

In practice, an incomplete permutation specifies the order of $\permorder$ players, while leaving the others unassigned. A complete permutation is, as in \autoref{def:incomplete}, an incomplete permutation of order \new{$\permorder=\nagents$}.
We refer to any permutation $\perm \in \permset$ as a \emph{complete permutation}, and to $\permsetincomplete$ as the set of all incomplete (and complete) permutations of any order such that $\permset \subseteq \permsetincomplete$.
\new{In this paper, we assume that, for an incomplete permutation $\permincomplete$, the unassigned players either (1). ignore all the interactive safety cost and play on their own, or (2). respect the interactive safety cost with respect to all assigned players, while ignoring all unassigned ones.
In both cases, the underlying trajectory game \eqref{eq:game} parameterized by $\permincomplete$ is well defined, and we denote an \gls{LSE} strategy of the game as $\policy(\permincomplete)$.
Therefore, the value of an incomplete permutation $\costsocial(\policy(\permincomplete))$ is also defined via \autoref{def:bound}; we will provide a detailed explanation of how to compute it using \gls{STP} in \autoref{sec:stp}.}

\begin{proposition}[Upper bound]
\label{prop:upper_bound}
For any complete permutation $\perm \in \permset$, its value is an upper bound on the optimal social cost, i.e., $\costsocial(\policy(\perm)) \ge \costsocial^*(\policy(\perm^*))$.
\end{proposition}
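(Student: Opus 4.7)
The plan is to derive the inequality directly from \autoref{def:optimal}. By that definition, a socially optimal permutation $\perm^*$ satisfies
\[
\perm^* \in \argmin_{\tilde{\perm} \in \permset} \sum_{i \in \orderset} \cost^i(\policy(\tilde{\perm})),
\]
which, using the definition of the social cost $\costsocial(\policy) := \sum_{i \in \orderset} \cost^i(\policy)$, is exactly $\argmin_{\tilde{\perm} \in \permset} \costsocial(\policy(\tilde{\perm}))$. Consequently, for any complete permutation $\perm \in \permset$,
\[
\costsocial(\policy(\perm)) \;\geq\; \min_{\tilde{\perm} \in \permset} \costsocial(\policy(\tilde{\perm})) \;=\; \costsocial(\policy(\perm^*)),
\]
which is precisely the claimed upper bound. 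First I would state the expansion of the social cost; then I would invoke \autoref{def:optimal} to identify $\perm^*$ as a minimizer over $\permset$; finally I would conclude by the elementary fact that any element of a set is at least as large as the set's minimum.

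The only subtlety worth flagging — and in some sense the "hard part" — is semantic rather than technical: the notation $\policy(\perm)$ denotes an \gls{LSE} of the Stackelberg game parameterized by $\perm$, and such an equilibrium need not be unique. The argument above is well-posed under the standing convention (introduced in \autoref{sec:wpf}) that for each permutation we invoke a fixed subgame oracle returning a canonical \gls{LSE}, so that $\costsocial(\policy(\perm))$ is a deterministic function of $\perm$ and the $\argmin$ over $\permset$ is unambiguous. Once this convention is made explicit, no additional machinery (properties of \gls{STP}, branch-and-bound bookkeeping, or structural assumptions on the game) is needed, and the proposition reduces to a one-line consequence of optimality.
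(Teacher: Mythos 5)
Your proof is correct and matches the paper's treatment: the paper states this proposition without proof precisely because it is the immediate consequence of Definition~\ref{def:optimal} that you spell out, namely that the value of any complete permutation is at least the minimum over $\permset$. Your remark about fixing a canonical \gls{LSE} per permutation so that $\costsocial(\policy(\perm))$ is well defined is a reasonable clarification consistent with the paper's conventions.
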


Let $\permmapping:\permsetincomplete \rightarrow \permsetincomplete$ be the mapping from any incomplete permutation of order $\permorder$ to the set of its descendant (complete or incomplete) permutations of order $\permorder+1$. For any incomplete permutation $\permincomplete$ of order $\permorder$, there are $\permorder!$ complete permutations descending from $\permincomplete$.
Therefore, if we can determine that there exists no improving \gls{LSE} in the descendants $\permmapping(\permincomplete)$, then we can avoid searching among those complete permutations. To achieve this, we should be able to ensure that the bound of $\permincomplete$ is a lower bound on any bound associated with its descendants $\permmapping(\permincomplete)$, as we formalize in \autoref{assump:admissibility}.

\begin{assumption}[Admissibility]
\label{assump:admissibility}
Let $\permincomplete \in \permsetincomplete$ be an incomplete permutation, and $\costsocial(\policy(\permincomplete))$ be its bound.
For the complete permutation $\perm^{\permincomplete} \in \permset$ descending from $\permincomplete$, $\costsocial(\policy(\perm^{\permincomplete})) \ge \costsocial(\policy(\permincomplete))$.
\end{assumption}

\autoref{assump:admissibility} enforces that the bound provided by an incomplete permutation never \emph{overestimates} the social cost of the complete permutation descending from it.
In practice, the assumption enables us to employ incomplete permutations as \emph{relaxations} of complete permutations, hence performing a one-to-many pruning of the search space via $\permmapping$.

\begin{remark}
    The admissibility assumption is common in algorithms of graph search and mixed-integer optimization.
    In A* search~\cite{hart1968formal}, admissibility means that a heuristic function, which estimates the cost from a node to the goal, never overestimates the actual minimal cost of reaching that goal.
    This assumption is crucial for proving that A* converges to the optimal path to the goal.
    In branch-and-bound ~\cite{land_automatic_1960} for mixed-integer optimization, admissibility is automatically satisfied since relaxing the integer requirements into continuous ones always returns an optimistic lower bound on the actual cost.
\end{remark}

\subsection{The Algorithm}
In \autoref{alg:bnp}, we present the pseudocode for \gls{BNP}. Given an instance of \cref{eq:game}, \gls{BNP} outputs an optimal permutation $\perm^*$ and its \gls{LSE} $\policy^*$. In Line~\ref{alg:cnp:initialize}, we initialize a list $\nodelist$ of nodes associated with the search tree, and the incumbent optimal cost, permutation, and controls. Without loss of generality, we initialize $\nodelist$ with a permutation $\permnode$ of order $\permorder=1$, \ie, we assign the order of one player.
We explore the search space until $\nodelist$ is empty by selecting a node $\node \in \nodelist$ (Line~\ref{alg:cnp:exploration}) via a so-called exploration strategy; we will describe a few exploration strategies later in this section. Let $\permnode$ be the permutation associated with $\node$. In Line~\ref{alg:cnp:bounding}, we call a subgame solver (\eg, \gls{STP} presented in  \autoref{sec:stp}) to compute the value of $\permnode$ and its strategy $\policy(\permnode)$. If the corresponding strategy profile $\policy(\permnode)$ is \textit{feasible} (\ie, it induces a feasible set of controls with no collisions or $\permnode \in \permset$), we then evaluate its bound. If it improves the incumbent best bound $\costsocial^*$, then we update the incumbent information. Afterward, we prune any open node in $\nodelist$ with non-improving bound values (Line~\ref{alg:cnp:pruning}). Otherwise, if $\node$ is not feasible, we branch by adding to $\nodelist$ the descendants $\permmapping(\permnode)$ of order $\permorder+1$, where $\permorder$ is the order of $\permnode$.

\begin{algorithm}[!ht]
\SetKwBlock{Repeat}{repeat}{}
\DontPrintSemicolon
\small
\caption{Branch and Play \label{alg:bnp}}
\KwData{An instance of \cref{eq:game} }
\KwResult{An optimal permutation $\perm^*$ and its \gls{LSE} $\policy^*$}
Initialize $\nodelist$ with the incomplete permutation of order $\permorder=1$, set $\costsocial^*\gets\infty$, $\perm^* \gets \emptyset$ and $\policy^* \gets \emptyset$\label{alg:cnp:initialize}\;
\While{ $\nodelist\neq \emptyset$}{
    {\textbf{\textcolor{Periwinkle}{Exploration Strategy}}}: select $\node \in \nodelist$ \label{alg:cnp:exploration}\;
    $\permorder \gets$ order of $\node$, and $\nodelist \gets \nodelist \backslash\{ \node\}$ \;
    {\textbf{\textcolor{Periwinkle}{Bounding}}}: compute $\costsocial(\policy(\permnode))$ and $\policy(\permnode)$ \label{alg:cnp:bounding}\;
    \If{$\costsocial(\policy(\permnode)) < \infty$ (\ie, the subproblem is feasible) }{
        \If{$\policy(\permnode)$ is feasible and $\costsocial^*>\costsocial(\policy(\permnode))$}{
            $\costsocial^*\gets \costsocial(\policy(\permnode))$, $p^*\gets \permnode$ and $\policy^* \gets \policy(\permnode)$\label{alg:cnp:complete}\;
            {\textbf{\textcolor{Periwinkle}{Pruning}}}: delete from $\nodelist$ any $\node$ such that $\costsocial(\policy(\permnode)) > \costsocial^*$ \label{alg:cnp:pruning}\;
        }
        \lElse{
            {\textbf{\textcolor{Periwinkle}{Branching}}}: Add to $\nodelist$ the descendants $\permmapping(\node)$ of $\permnode$ of order $\permorder+1$\label{alg:cnp:branching}
        }
    }
}
\textbf{return} $\perm^*$ and  $\policy^*$
\end{algorithm}

\subsection{Exploration, Pruning, and Convergence}
\label{sec:wpf:expl_pruning}
\gls{BNP} is a general-purpose algorithm, and it can be customized depending on each application's need. In this section, we briefly overview some customizations that expedite \gls{BNP} when seeking \gls{LSE} of large-scale trajectory games.

\subsubsection{Exploration strategies} In Line~\ref{alg:cnp:exploration} of \autoref{alg:bnp}, we select a node $\node$ from $\nodelist$ and perform the associated bounding operation. Naturally, as the tree is exponentially large, the exploration strategy significantly affects the convergence behavior. Traditional exploration strategies are depth-first or best-first. 
In depth-first, we explore the tree by prioritizing complete permutations over incomplete ones. The algorithm will find a feasible solution in at most $\nagents$ iterations, assuming the associated complete permutation is feasible. The feasible solution can help, in turn, to efficiently prune parts of the search tree due to its bound.
On the other hand, in best-first, we explore the tree by prioritizing permutations with the smallest bound possible. The algorithm will likely explore parts of the tree with small lower bounds with the hope of finding the optimal permutation.
Naturally, exploration strategies are heuristic. For instance, in mixed-integer optimization, solvers usually employ sophisticated exploration strategies~\cite{lodi2013heuristic}.

\begin{figure}[!hbtp]
    \centering
    \includegraphics[width=\iftoggle{preprint}{0.5\textwidth}{0.92\columnwidth}]{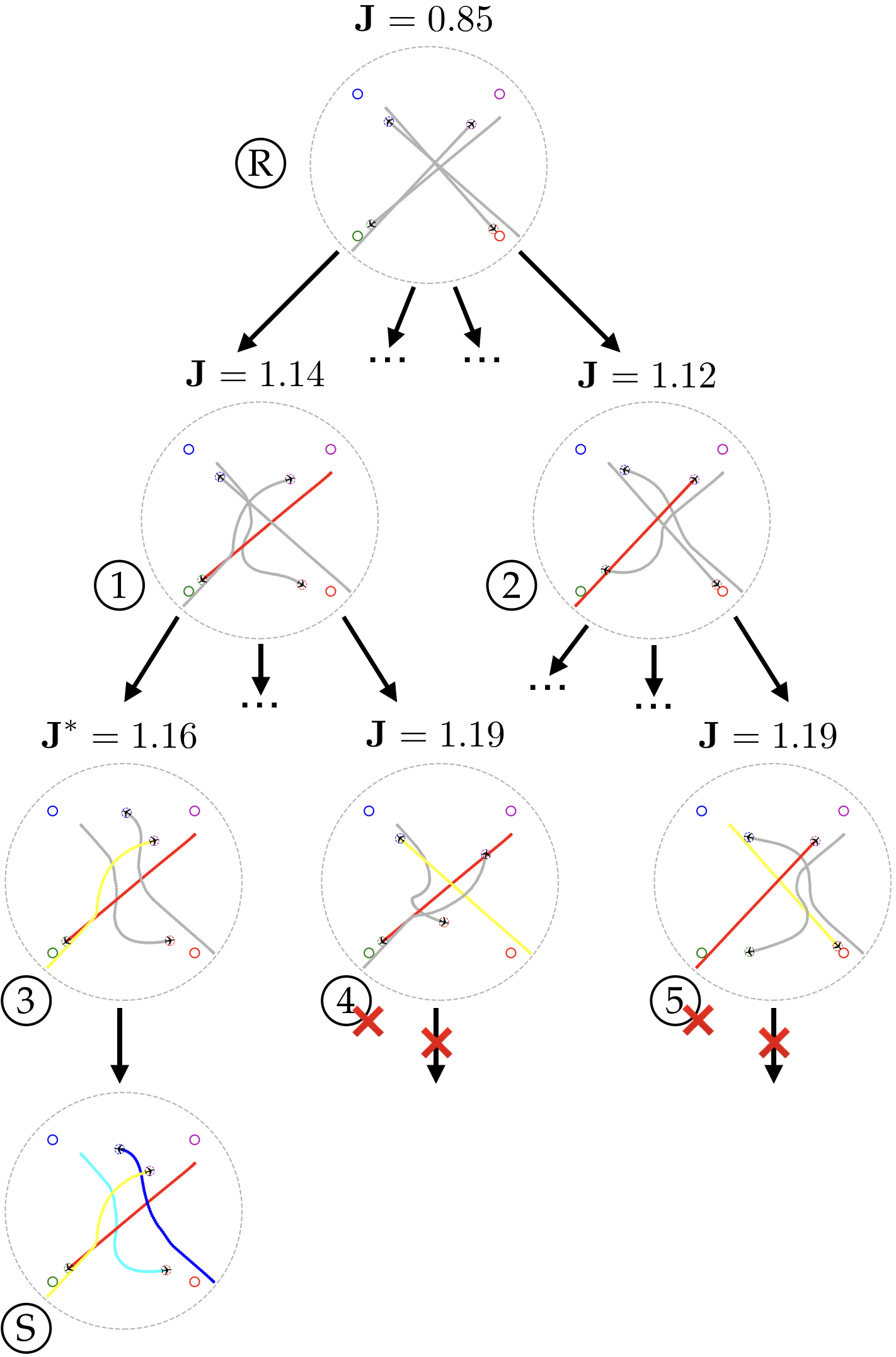}
    \caption{Illustration of a \gls{BNP} search tree. Trajectories of unassigned players, who avoid all assigned players, are plotted in grey. In the root node \circledtext{R}, all players are unaware of each other, resulting in a lower bound of the social cost. Players in nodes \circledtext{3}, \circledtext{4}, and \circledtext{5} are collision-free. Unassigned players in those nodes can be given any order of player and there is no need to descend further. Nodes \circledtext{4} and \circledtext{5} are pruned since they produce a higher cost than the feasible solution \circledtext{S}, which turns out to be the optimal solution.}
    \label{fig:bnp_tree}
\end{figure}

\subsubsection{Pruning strategies}
\label{subsec:pairwise_prune}
In addition to the native bound-based pruning \gls{BNP} performs (Line~\ref{alg:cnp:pruning} of \autoref{alg:bnp}), we exploit the structure of the underlying trajectory game to further prune the search tree. In particular, we introduce a new pruning strategy called \emph{pairwise-collision pruning}. Let $\node$ be a node associated with an incomplete permutation $\permnode = (\permnode_+, \permnode_-)$ of order $\permorder>1$, and $\policy(\permnode)$ be the associated strategy profile.
Consider a pair of distinct, unassigned players $i,~j \in \permnode_-$ and their trajectories associated with $\policy(\permnode)$.
\new{Recall from Section~\ref{sec:wpf:incomplete_perm} that all unassigned players in $\permnode_-$ will ignore the interactive safety cost with respect to each other.}
If trajectories $i$ and $j$ do not collide, permutations $(\permnode_+, i, j, \tilde{\perm}^\node_-)$ and $(\permnode_+, j, i, \tilde{\perm}^\node_-)$ are \textit{equivalent} for all $\tilde{\perm}^\node_-$ that assigns orderings to the rest of players in $\permnode_-$ other than $i$ and $j$.
Therefore, we can arbitrarily assign an order (\eg, $i$ plays before $j$) and prune the other children permutations in $\permmapping(\node)$.
Furthermore, if all players in $\permnode_-$ are collision-free, $\policy(\permnode)$ is automatically an \gls{LSE} strategy for games of all complete permutations descending from $\node$.

\begin{lemma}
\label{lem:bound_prune}
    Let \autoref{assump:admissibility} hold. Bound-based pruning does not exclude any node $\node$ containing the optimal solution of \cref{eq:mio}.
\end{lemma}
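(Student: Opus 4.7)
The plan is a direct proof by contradiction using the admissibility assumption (\autoref{assump:admissibility}) together with the upper-bound property from \autoref{prop:upper_bound}. Suppose, towards a contradiction, that at some iteration of \autoref{alg:bnp}, the pruning step in Line~\ref{alg:cnp:pruning} deletes a node $\node \in \nodelist$ whose associated (possibly incomplete) permutation $\permnode$ has, as one of its complete descendants, an optimal permutation $\perm^* \in \argmin_{\tilde{\perm} \in \permset} \costsocial(\policy(\tilde{\perm}))$.

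First, I would invoke the pruning criterion: deletion of $\node$ occurs only when $\costsocial(\policy(\permnode)) > \costsocial^*$, where $\costsocial^*$ is the current incumbent value recorded in Line~\ref{alg:cnp:complete}. Since $\costsocial^*$ is set to the value of some complete permutation that has already been evaluated, \autoref{prop:upper_bound} gives $\costsocial^* \geq \costsocial(\policy(\perm^*))$, so in particular the pruning implies $\costsocial(\policy(\permnode)) > \costsocial(\policy(\perm^*))$.

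Second, I would apply \autoref{assump:admissibility} to the complete descendant $\perm^*$ of $\permnode$: admissibility yields $\costsocial(\policy(\perm^*)) \geq \costsocial(\policy(\permnode))$. Chaining this with the previous inequality produces $\costsocial(\policy(\perm^*)) \geq \costsocial(\policy(\permnode)) > \costsocial(\policy(\perm^*))$, which is the desired contradiction. Hence no node whose descendants include an optimal complete permutation can be removed by bound-based pruning.

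I do not anticipate a serious obstacle: the statement is the standard correctness argument for branch-and-bound pruning, and the two ingredients (admissible lower bounds on descendants and incumbent upper bounds on the optimum) are already at hand. The only subtlety worth flagging is that $\costsocial^*$ is updated only from feasible complete permutations (Line~\ref{alg:cnp:complete}), which is what makes it a legitimate upper bound via \autoref{prop:upper_bound}; I would state this explicitly to make the chain of inequalities airtight.
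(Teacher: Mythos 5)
Your proof is correct: the contradiction chain (pruning requires the node's bound to exceed the incumbent, the incumbent upper-bounds the optimum by \autoref{prop:upper_bound}, and admissibility lower-bounds every complete descendant's value by the node's bound) is exactly the standard branch-and-bound correctness argument this lemma is meant to capture. The paper itself states \autoref{lem:bound_prune} without an explicit proof, so your write-up supplies precisely the intended reasoning, including the useful observation that the incumbent is only ever set from feasible complete permutations.
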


\begin{lemma}
\label{lem:col_prune}
    Pairwise-collision pruning does not exclude any node $\node$ containing the optimal solution of \cref{eq:mio}.
\end{lemma}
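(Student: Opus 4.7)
The plan is to establish \autoref{lem:col_prune} by an exchange argument: if pairwise-collision pruning discards a subtree containing an optimal complete permutation $\perm^*$, then we can produce an alternative complete permutation $\perm'$ with identical social cost that remains in the search tree, so optimality is preserved. Concretely, suppose pairwise-collision pruning is triggered at node $\node$ with incomplete permutation $\permnode = (\permnode_+, \permnode_-)$, where a pair $i,j \in \permnode_-$ has been observed to be collision-free in $\policy(\permnode)$, and the branch with $j$ before $i$ is dropped. We need only show that for every completion $\tilde{\perm}^\node_-$ of the remaining orderings, the two complete permutations $\perm_1 = (\permnode_+, i, j, \tilde{\perm}^\node_-)$ and $\perm_2 = (\permnode_+, j, i, \tilde{\perm}^\node_-)$ yield strategy profiles with the same social cost; then a swap within $\perm^*$ produces an admissible $\perm'$ at no cost change.

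The core sub-claim to prove is that the \gls{STP} strategies of $\perm_1$ and $\perm_2$ coincide trajectory-by-trajectory. Players in $\permnode_+$ play first in both cases and are therefore identical. Next, note that at node $\node$, each unassigned player plans ignoring the interactions among unassigned peers, so $i$ and $j$ each compute their trajectory reacting only to $\permnode_+$; denote these $\traj^i(\permnode)$ and $\traj^j(\permnode)$, which are collision-free by hypothesis. Under $\perm_1$, $i$ plans first among $\{i,j\}$ reacting only to $\permnode_+$, so $i$'s trajectory equals $\traj^i(\permnode)$; then $j$ plans reacting to $\permnode_+$ and $i$. The additional interactive term $\stagecostsafe^{ji}$ incurred by $j$ along the candidate trajectory $\traj^j(\permnode)$ is zero whenever $\traj^i(\permnode)$ and $\traj^j(\permnode)$ do not collide, so $\traj^j(\permnode)$ remains optimal for $j$'s planning problem under $\perm_1$. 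By symmetry, the same holds under $\perm_2$. Hence the trajectories of players in $\permnode_+ \cup \{i,j\}$ match exactly under $\perm_1$ and $\perm_2$, and all later players in $\tilde{\perm}^\node_-$ face identical sequential planning subproblems. Since \gls{STP} is deterministic given predecessors' plans, every subsequent trajectory coincides, and therefore $\costsocial(\policy(\perm_1)) = \costsocial(\policy(\perm_2))$.

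The main obstacle is the subtle claim that $\traj^j(\permnode)$ remains optimal for $j$ once $j$'s cost is augmented with the interactive safety term against $i$. This is not automatic for a general $\stagecostsafe^{ij}$ that may penalize mere proximity short of collision, so I would make precise the notion of ``do not collide'' used in \autoref{subsec:pairwise_prune}: specifically, collision-freeness must be read as the pairwise safety term evaluating to zero along the two trajectories, which is the setting adopted throughout the paper (indicator-type collision penalties). Once this reading is in place, the zero-penalty argument above goes through, and the cascading argument that later players face unchanged subproblems follows directly.

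Finally, to conclude the lemma, pick any optimal $\perm^* \in \permset$ that a run of \gls{BNP} would place in the discarded subtree. By construction, $\perm^*$ has the local ordering $(\ldots,j,i,\ldots)$ at the pruning node. Let $\perm'$ be obtained from $\perm^*$ by swapping these two adjacent entries. By the sub-claim just argued, $\costsocial(\policy(\perm')) = \costsocial(\policy(\perm^*))$, so $\perm'$ is also optimal. Moreover, $\perm'$ has the retained ordering at the pruning node and therefore lies in a branch that pairwise-collision pruning keeps open. Thus pairwise-collision pruning never excludes all optimal permutations, proving \autoref{lem:col_prune}.
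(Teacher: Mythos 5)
Your proof is correct and follows essentially the same route as the paper, which never gives a standalone proof of \autoref{lem:col_prune} but justifies it via the equivalence claim in the pairwise-collision pruning paragraph of \autoref{subsec:pairwise_prune}: swapping the adjacent collision-free pair $i,j$ after $\permnode_+$ leaves all trajectories, and hence the social cost, unchanged, so the retained branch always contains a permutation matching the cost of any pruned optimal one. Your explicit treatment of the subtlety that ``do not collide'' must mean the pairwise safety term vanishes along the trajectories (so that $\traj^j(\permnode)$ stays optimal once the extra term is added) is a useful sharpening of the paper's informal argument rather than a departure from it.
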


\subsubsection{Convergence} Theoretically, \gls{BNP} is guaranteed to find the \new{socially optimal Stackelberg equilibrium} for an instance of \cref{eq:game}, \ie, the  optimal solution of the \new{mixed-integer optimization problem~\eqref{eq:mio}, as long as \autoref{assump:admissibility} holds and the bounding is exact.}

\begin{theorem}
\label{thm:bnp}
    \gls{BNP}~returns the \new{socially optimal Stackelberg equilibrium}, \ie, the globally-optimal solution of~\eqref{eq:mio}, if \autoref{assump:admissibility} holds \new{and the bounding procedure returns the exact optimistic cost on any permutation $\tilde{\perm}$}. Furthermore, the worst-case number of iterations of \gls{BNP} is of order $O(\nagents!)$. 
\end{theorem}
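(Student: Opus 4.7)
The plan is to prove the two assertions separately: first, that upon termination \gls{BNP} returns the globally optimal solution of~\eqref{eq:mio}; second, that the number of iterations is $\bigoh(\nagents!)$ in the worst case.

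For correctness, I would first argue termination. The nodes generated by \gls{BNP} correspond to incomplete permutations of orders $\permorder = 1, \ldots, \nagents$. Each iteration removes exactly one node from $\nodelist$, and branching only adds descendants of strictly greater order via $\permmapping$, bounded above by $\nagents$. Hence $\nodelist$ becomes empty in finitely many iterations. Next I would establish the invariant that, throughout the execution, (i) $\costsocial^*$ is a valid upper bound on the optimal social cost of~\eqref{eq:mio}, and (ii) no subtree that has been removed from $\nodelist$ contains a complete permutation whose value is strictly less than $\costsocial^*$. The invariant holds trivially at initialization. Branching preserves it since all descendants are retained. Bound-based pruning preserves it by \autoref{lem:bound_prune}, whose soundness requires exactly \autoref{assump:admissibility}; and pairwise-collision pruning preserves it by \autoref{lem:col_prune}. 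The hypothesis that the bounding procedure returns the exact optimistic cost is precisely what makes \autoref{assump:admissibility} applicable: an inexact bound could overestimate the value of an incomplete permutation and falsely discard a subtree containing the optimum. When $\nodelist = \emptyset$, every $\perm \in \permset$ has either been evaluated directly or had its value lower-bounded above $\costsocial^*$ via an ancestor; in both cases $\costsocial^* \le \costsocial(\policy(\perm))$, so $(\perm^*, \policy^*)$ solves~\eqref{eq:mio}.

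For the iteration bound, I would count the nodes of the \gls{BNP} search tree. An incomplete permutation of order $\permorder$ is an ordered selection of $\permorder$ players out of $\nagents$, and there are exactly $\nagents!/(\nagents - \permorder)!$ such nodes. The total number of nodes is therefore
\begin{equation*}
\sum_{\permorder=1}^{\nagents} \frac{\nagents!}{(\nagents-\permorder)!}
\;=\; \nagents! \sum_{k=0}^{\nagents-1} \frac{1}{k!}
\;\le\; e\,\nagents! \;=\; \bigoh(\nagents!).
\end{equation*}
Each iteration visits one node, so even in the worst case where no pruning is triggered, \gls{BNP} terminates within $\bigoh(\nagents!)$ iterations.

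The main obstacle is justifying soundness when the two pruning rules interact, particularly verifying that the algorithm's early acceptance of a collision-free incomplete permutation (Line~\ref{alg:cnp:complete}, through the pairwise-collision argument of Section~\ref{subsec:pairwise_prune}) truly yields the same social cost as every complete descendant. This follows from the observation that a collision-free strategy profile at $\permincomplete$ lifts to an \gls{LSE} of every complete descendant with identical individual and interactive safety costs, which is the content of \autoref{lem:col_prune} and fits naturally into the invariant above. Once this is in place, combining the two lemmas with \autoref{assump:admissibility} closes the correctness argument.
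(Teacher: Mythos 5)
Your proof is correct and follows essentially the same route as the paper, which simply cites admissibility (\autoref{assump:admissibility}) together with the validity of the two pruning rules (\autoref{lem:bound_prune} and \autoref{lem:col_prune}); you have merely made the standard branch-and-bound invariant and the termination argument explicit. Your node-counting bound $\sum_{\permorder=1}^{\nagents} \nagents!/(\nagents-\permorder)! \le e\,\nagents!$ is a worthwhile addition, since the paper asserts the $O(\nagents!)$ worst case without justification.
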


\begin{proof}
    The convergence follows directly from admissibility (\autoref{assump:admissibility}) and validity of pruning (\autoref{lem:bound_prune} and \autoref{lem:col_prune}). 
\end{proof}

\begin{remark}
    Although the theoretical worst-case complexity of \gls{BNP} is factorial (as in a brute-force enumeration), the algorithm \new{rarely} hits the worst case (see \autoref{sec:result}).
    In practical applications (\eg, the \gls{ATC} in \autoref{fig:time_bnp}), the pruning strategies are effective at reducing the search space. Therefore, on average, Branch and Play converges in a fraction of the worst-case iterations \new{in the tested examples in \autoref{sec:result}}.
\end{remark}

\subsection{Receding Horizon Planning and Warmstart}
We often would like to apply \gls{BNP} in receding horizon fashion (\ie, \gls{MPC}). Namely, during each planning cycle, the robots execute only the first control action $\ctrl_0 = \policy_0(\state_0)$, the horizon shifts, and we run again \gls{BNP} with the updated initial state.
This procedure allows players to update their strategies \emph{and} switch the orders of play to account for dynamically changing environments and compensate for performance loss due to a finite (possibly short) planning horizon.

Inspired by existing \gls{MPC} algorithms~\cite{borrelli2017predictive}, we provide a warmstart procedure for receding horizon \gls{BNP}.
At planning step $t > 0$, before searching the tree from the root node, we work with the (partially) explored tree at the last planning step $t-1$ to obtain preliminary bounds on $\costsocial^*$.
First, to warmstart the subgame, we shift (by one time step) the equilibrated players' trajectories in each explored node.
Second, we solve the subgames for all explored leaf nodes (\ie, those with complete permutations) and their parent nodes to obtain lower and upper bounds on $\costsocial^*$, respectively.
Since in general the optimal permutations are likely to be similar (even the same) between two consecutive planning cycles, those bounds are expected to carry useful bound information.

\section{Computing Stackelberg Equilibria with STP}
\label{sec:stp}

\gls{BNP} supports any off-the-shelf Stackelberg solver as a subroutine to perform the bounding. However, a computationally efficient and scalable game solver that \emph{guarantees} an \gls{LSE} still remains an open research challenge.
In this section, we extend the \gls{STP} algorithm originally developed in~\cite{chen2015safe,chen2018robust} for trajectory games~\eqref{eq:game} and provide a proof that it converges to an \gls{LSE}.
Throughout this section, we assume the players follow a given permutation $\perm \in \permset$.

\subsection{Sequential Trajectory Planning for Stackelberg Games}
Consider an agent $i \in \orderset$, an initial state $\state_0 \in \reals^{n}$, and all predecessors' planned trajectories $\btraj^{\prec i} := \bstate^{\prec i}_{[0:T]}$. \gls{STP} solves a \emph{single-agent} trajectory optimization problem with a \emph{surrogate} cost that ignores the safety costs of all successors, namely, 
\begin{equation}
\label{eq:surrogate_cost}
    \tcost^i(\policy^i(\state_0^i); \btraj^{\prec i}) = \costindi^i(\policy^i(\state_0^i)) + \tcostsafe^i(\traj^i; \btraj^{\prec i}),
\end{equation}
where $\tcostsafe^i(\traj^i; \btraj^{\prec i}) := \sum_{k=0}^{T} \sum_{j \in \preset^i} \stagecostsafe^{ij}_k(\state^i_k, \bstate^j_k)$ is the surrogate safety cost.
Other nonlinear trajectory game algorithms (\eg, ILQGame~\cite{fridovich2020efficient,khan2023leadership} and iterated best response~\cite{zanardi2021urban}) iteratively update players' strategies until convergence, which is not always guaranteed. In contrast, \gls{STP} computes the strategy of each player \emph{in a single pass}, leading to a deterministic computation overhead that scales linearly with the number of players (c.f.~\autoref{fig:time_stp}).

\begin{algorithm}[!htbp]
    \DontPrintSemicolon
    \small
	\caption{STP }
	\label{alg:stp}
	\KwData{Permutation $\perm$, current joint state $\state_t$, dynamics~\eqref{eq:dyn_sys}, stage costs~\eqref{eq:cost_assump}, horizon length $T>0$}
    \KwResult{Equilibrium trajectory $(\state_{[t:t+T]}, \ctrl_{[t:t+T]})$ and cost $\costsocial(\policy(\perm))$}
    \For{$i \in \perm$}{
        \label{alg:stp:solve_stp} $(\state^i_{[t:t+T]}, \ctrl^i_{[t:t+T]}) \gets$ \textsc{SolveSTP}$\left(\bstate^{\prec i}_{[t:t+T]})\right)$ \;
        Communicate $\state^i_{[t:t+T]}$ with player $i+1$\;
    }
\end{algorithm}

\subsection{Guaranteeing an \gls{LSE}}
\label{sec:guarantee_LSE}
In order to guarantee that \gls{STP} finds an \gls{LSE} for~\eqref{eq:game}, we introduce the Assumption \ref{assump:STP:best_f}.

\begin{assumption}[Best-effort cautious follower]
\label{assump:STP:best_f}
    Given an initial state $\state_0 \in \reals^{n}$, for all $i \in \orderset$, the policy $\policy^i(\state^i_0)$ satisfies, for an open neighbourhood $\tpolicyset^i(\policy^i) \ni \policy^i$,
    \begin{equation}
    \label{eq:assump:STP:best_f_1}
    \begin{aligned}
        \tcost^i(\policy^{i}(\state^i_0)) \leq \tcost^i(\tpolicy^{i}(\state^i_0)),~\quad~\forall\tpolicy^i \in \tpolicyset^i(\policy^i).
    \end{aligned}
    \end{equation}
    Moreover, the policy $\policy^i(\state^i_0)$ of each follower $i \in \orderset \setminus \{1\}$ satisfies, for an open neighbourhood $\tpolicyset^i(\policy^i) \ni \policy^i$,
    \begin{equation}
    \label{eq:assump:STP:best_f_2}
        \tcostsafe^i(\traj^i; \ttraj^{\prec i}(\tpolicy^1)) \leq \tcostsafe^i(\ttraj^i(\tpolicy^i); \ttraj^{\prec i}(\tpolicy^1)),
    \end{equation}
    for all $\tpolicy^i \in \tpolicyset^i(\policy^i)$ and predecessors' state trajectories $\ttraj^{\prec i}(\tpolicy^1)$ resulting from the leader's policy $\tpolicy^1 \in \tpolicyset^1(\policy^1)$.
\end{assumption}
\autoref{assump:STP:best_f} states that each player computes a locally-optimal policy with respect to their surrogate costs $\tcost^i$, and the followers additionally ensure that they cannot further decrease the safety cost \emph{despite} any local deviations of the leader policy.
In practice, this implies \autoref{lem:courtesy}.

\begin{lemma}[Neutralized courtesy]
\label{lem:courtesy} 
    Let \autoref{assump:STP:best_f} hold.
    The leader cannot improve by switching to another policy, \ie, there is no $\tpolicy^1 \in \tpolicyset^1(\policy^1)$ such that $\cost^1(\tpolicy^{1}(\state^1_0)) < \cost^1(\policy^{1}(\state^1_0))$.
\end{lemma}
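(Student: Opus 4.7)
The plan is to decompose the leader's total cost as $\cost^1 = \costindi^1 + \costsafe^1$ and show that no local deviation $\tpolicy^1 \in \tpolicyset^1(\policy^1)$ strictly decreases either component.

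For the individual-cost component, I would note that the leader has no predecessors, so the surrogate cost collapses to $\tcost^1 = \costindi^1$. The first clause of \autoref{assump:STP:best_f} applied to $i=1$ then yields $\costindi^1(\tpolicy^1(\state^1_0)) \geq \costindi^1(\policy^1(\state^1_0))$ for every $\tpolicy^1 \in \tpolicyset^1(\policy^1)$.

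For the safety-cost component, I would exploit the pairwise symmetry $\stagecostsafe^{1j}_k(\state^1_k,\state^j_k) = \stagecostsafe^{j1}_k(\state^j_k,\state^1_k)$ to rewrite
\[
\costsafe^1(\policy) \;=\; \sum_{j \in \sucset^1} \sum_{k=0}^{T} \stagecostsafe^{j1}_k(\state^j_k,\state^1_k),
\]
so that each summand coincides with a term inside follower $j$'s surrogate safety cost $\tcostsafe^j$. The second clause of \autoref{assump:STP:best_f} asserts that each $\policy^j$ is a local minimizer of $\tcostsafe^j$ \emph{robustly} against any local leader perturbation $\tpolicy^1$, meaning the followers have already extracted all locally available safety improvement regardless of how the leader moves. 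An envelope-theorem-style first-order analysis of the induced map $\tpolicy^1 \mapsto \tcostsafe^j(\traj^j(\policy^j);\ttraj^{\prec j}(\tpolicy^1))$, combined with the symmetry identity above, should then yield $\costsafe^1(\tpolicy^1(\state^1_0)) \geq \costsafe^1(\policy^1(\state^1_0))$ for every local $\tpolicy^1$.

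Adding the two bounds gives $\cost^1(\tpolicy^1(\state^1_0)) \geq \cost^1(\policy^1(\state^1_0))$, contradicting any strictly improving leader deviation. The hard part will be the safety step: \autoref{assump:STP:best_f}'s second clause is a \emph{follower-side} optimality condition (each $\policy^j$ locally minimizes $\tcostsafe^j$ in $\tpolicy^j$ against leader perturbations), whereas the conclusion demands a \emph{leader-side} non-improvement statement on $\costsafe^1$. Bridging the two through symmetry plus an envelope argument is precisely what the lemma's title ``Neutralized courtesy'' encapsulates---the followers' robust safety minimization nullifies any incentive the leader would have to deviate for the sake of others' safety.
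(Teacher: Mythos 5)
Your skeleton matches the paper's: decompose $\cost^1 = \costindi^1 + \costsafe^1$, dispose of the individual part via the first clause of \autoref{assump:STP:best_f} (since the leader has no predecessors, $\tcost^1 = \costindi^1$), and transfer the safety part to the followers via the symmetry of $\stagecostsafe^{ij}$. The first step is exactly right. The gap is in the safety step, which is where the lemma's entire content lives. An envelope-theorem analysis of the map $\tpolicy^1 \mapsto \tcostsafe^j(\traj^j;\ttraj^{\prec j}(\tpolicy^1))$ yields a first-order \emph{derivative identity} for the follower's optimized safety cost as the leader perturbs; it carries no sign information and therefore cannot by itself deliver the one-sided bound $\costsafe^1(\tpolicy^1(\state^1_0)) \geq \costsafe^1(\policy^1(\state^1_0))$ that you need. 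The paper uses no such machinery: it argues by contradiction. If $\cost^1$ strictly dropped under some $\tpolicy^1$, then $\costsafe^1$ would have to strictly drop (the individual cost is already locally minimal by \eqref{eq:assump:STP:best_f_1}), hence by symmetry $\costsafe^2$ would strictly drop under the leader's deviation and the follower's induced response $\tpolicy^2(\tpolicy^1)$; instantiating \eqref{eq:assump:STP:best_f_2} with $\tpolicy^2 = \tpolicy^2(\tpolicy^1)$ then shows the follower's original trajectory already attains, against the perturbed leader trajectory, a safety cost no larger than that of its new response, contradicting the strict drop. The inequality you are hoping to extract by an envelope argument is instead read directly off \eqref{eq:assump:STP:best_f_2}.

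A second, smaller issue concerns $N>2$. Your symmetry identity expresses $\costsafe^1$ as the sum over followers $j$ of the single pairwise term $\sum_k \stagecostsafe^{j1}_k$ appearing inside $\tcostsafe^j$, but \eqref{eq:assump:STP:best_f_2} constrains each $\tcostsafe^j$ \emph{as a whole}, including the follower--follower terms $\stagecostsafe^{jl}_k$ for $l \in \preset^j \setminus \{1\}$; nothing in the assumption controls the $\stagecostsafe^{j1}_k$ terms in isolation. The paper sidesteps this by induction: it first establishes the $N=2$ case, observes that a follower has no reason to change its trajectory when the leader deviates locally, and then treats the leader together with player $2$ as a single composite leader so that $N=3$ reduces to $N=2$, and so on. You would need either this grouping argument or some additional structure on the cross terms to make your all-followers-at-once summation go through.
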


\begin{proof}
    See Appendix~\ref{apdx:lem:courtesy}.
\end{proof}

\begin{theorem}[\gls{STP} finds an \gls{LSE}]
\label{thm:STP}
    If \autoref{assump:STP:best_f} hold, then the \gls{STP} strategy tuple $(\policy^{1},\ldots,\policy^{N})$ is an \gls{LSE} of game~\eqref{eq:game}.
\end{theorem}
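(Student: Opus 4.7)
The plan is to verify the LSE condition from \autoref{def:LSE} (inequality~\eqref{eq:GSE}) for every player, invoking \autoref{assump:STP:best_f} together with \autoref{lem:courtesy}. I would proceed by backward induction on the order of play, starting from the last player $N$ and moving up to the leader. The backward direction meshes naturally with the recursive structure of the optimal response map $\resmap^{\succ i}$ in \eqref{eq:GSE:opt_res} and the nested $\sup$ on both sides of \eqref{eq:GSE}.

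For the base case, player $N$ has $\sucset^N = \emptyset$, so $\tcost^N = \cost^N$ and both suprema in \eqref{eq:GSE} collapse; the LSE condition becomes the pointwise best-response inequality $\cost^N(\policy^{\prec N,*}, \policy^{N,*}) \leq \cost^N(\policy^{\prec N,*}, \tpolicy^N)$, which is exactly \eqref{eq:assump:STP:best_f_1} specialized to $i = N$. For the leader, the condition is handled by \autoref{lem:courtesy} once I translate its cost inequality into the $\sup$ form; this translation is straightforward provided the STP successor response is itself in $\resmap^{\succ 1}(\policy^{1,*})$, which is furnished by the induction.

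For an interior player $i$ with $1 < i < N$, I would exploit the symmetry of the safety cost, $\stagecostsafe^{ij} = \stagecostsafe^{ji}$, and split
\begin{equation*}
\cost^i(\policy) \;=\; \tcost^i\bigl(\policy^i; \btraj^{\prec i}\bigr) \;+\; \sum_{j \in \sucset^i}\sum_{k=0}^{T}\stagecostsafe^{ij}_k(\state^i_k, \state^j_k).
\end{equation*}
\autoref{assump:STP:best_f}\eqref{eq:assump:STP:best_f_1} controls the $\tcost^i$ piece for any local deviation $\tpolicy^i$, while \eqref{eq:assump:STP:best_f_2} applied to every successor $j \in \sucset^i$ (with $i$ now playing the role of the ``leader'' whose local deviation each successor is robust to) pins each $\policy^{j,*}$ as a local minimizer of $\tcostsafe^j$. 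By symmetry, the $j$-side minimization of $\stagecostsafe^{ji}$ also prevents the cross term $\stagecostsafe^{ij}$ from decreasing under $i$'s deviation, which is the same chain of reasoning used to prove \autoref{lem:courtesy} for the leader. Summing the two pieces then reproduces \eqref{eq:GSE} for player $i$.

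The main obstacle is the interaction between the two nested suprema in \eqref{eq:GSE} and the set-valued $\resmap^{\succ i}$. I expect the cleanest way to handle this is to argue that \eqref{eq:assump:STP:best_f_2}, viewed across all successors simultaneously, forces the local optimal response map to include the ``stay-put'' response $\policy^{\succ i,*}$ for every $\tpolicy^i$ in a sufficiently small neighbourhood; that membership is precisely the inductive hypothesis recast in the language of \eqref{eq:GSE:opt_res}. Once this is in place, the $\sup$ on the right-hand side is attained by the stay-put response and the pointwise inequality established above lifts to the $\sup$ inequality, completing the verification of \autoref{def:LSE} and hence the theorem. The delicate bookkeeping step that I expect to require the most care is choosing a single shrinking neighbourhood $\bpolicyset(\policy^*)$ small enough that \emph{all} the local-optimality statements invoked for the different players hold simultaneously.
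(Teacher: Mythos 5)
Your proposal is correct and takes essentially the same route as the paper: the published proof likewise reduces the theorem to the local best-response property in \autoref{assump:STP:best_f} for the followers and to \autoref{lem:courtesy} for the leader. The only difference is that your backward induction spells out the interior-player case---where a deviation might in principle lower the safety cost with respect to successors---which the paper compresses into the assertion that ``all followers optimally respond,'' and you resolve it with the same symmetry-plus-robust-follower argument that underlies \autoref{lem:courtesy}.
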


\begin{proof}
    The result is a direct consequence of~\autoref{lem:courtesy}. All followers optimally respond to the leader with their \gls{STP} policies, and the leader has no incentive to deviate from $\policy^1$ to reduce its cost $\cost^1$.
\end{proof}

\p{Connection to lexicographic preferences}
We can employ \gls{STP} to compute Stackelberg equilibria with lexicographic preferences, \eg, urban driving games~\cite{zanardi2021urban}, where the \gls{LSE} in~\autoref{def:LSE} is instead defined with the lexicographic total order $\preceq$ on the tuple $\left\langle \costsafe^i, \costindi^i\right\rangle$.
In this case, \autoref{assump:STP:best_f} can be \emph{relaxed} and we only need to enforce~\eqref{eq:assump:STP:best_f_1} for the leader, assuming that the follower's \gls{STP} policy $\policy^i$ is the local unique minimum of $\tcostsafe^i$.
The reasoning of \autoref{lem:courtesy} then applies directly here: The leader cannot deviate to reduce safety cost $\costsafe^1$. Furthermore, since the \gls{STP} policy has already attained the minimum of its individual cost $\costindi^1 =  \tcost^1$, the leader has no incentive to switch to another policy. Finally, due to the lexicographic ordering and symmetric safety costs, all followers arrive at an equilibrium
with their \gls{STP} policies that optimize $\tcostsafe^i$.

\p{Connection to generalized Stackelberg equilibrium}
Generalized Stackelberg equilibria~\cite{laine2023computation} encode pairwise safety with inequality constraint $h^{ij}_{\text{safe}} (\state^i_t,\state^j_t) \leq 0$. A generalized Stackelberg equilibrium can be interpreted as an \gls{LSE} of a game where~\eqref{eq:cost_assump} is such that
\begin{equation}
\label{eq:gen_Stbg}
    \stagecostsafe_t^{ij}(\state^i_t,\state^j_t) =\left\{\begin{array}{ll}
    0, & \text{if } h^{ij}_{\text{safe}} (\state^i_t,\state^j_t) \leq 0  \\
    \infty, & \text{otherwise}.
    \end{array}\right.
\end{equation}
In practice, one may use an optimization-based \gls{STP} solver (\eg, \gls{MPC}) to satisfy inequality constraints, and apply the same argument in \autoref{lem:courtesy} to show that \autoref{thm:STP} holds.

\subsection{\gls{STP} for Branch and Play}

We employ \gls{STP} as a subroutine of \gls{BNP} in Line~\ref{alg:cnp:bounding} of~\autoref{alg:bnp}.
We handle any incomplete permutation $\permincomplete = (\permincompleteassigned, \permincompleteunassigned)$ in two ways:
\begin{enumerate}[(a.)]
    \item Unassigned players play the last, avoiding all predecessors, \ie, $\preset^i = \permincompleteassigned,~\forall i \in \permincompleteunassigned$, 

    \item Unassigned players play unaware of others, \ie, $\preset^i = \emptyset,~\forall i \in \permincompleteunassigned$.
\end{enumerate}
We note that both options are useful in practice. The first option provides a tighter lower bound and is likelier to produce collision-free descendants that can be pruned. The second option generally requires less computational effort since the unassigned players only optimize their individual costs. While both are valid, the choice should be made depending on the specific application.

\p{Optimality}
\new{\gls{BNP} computes a globally-optimal solution (\autoref{thm:bnp}) whenever we can bound the permutations (incomplete and complete) with an exact (\ie, globally-optimal) subroutine that satisfies the admissibility property (\autoref{assump:admissibility}).
This implies that any globally-optimal trajectory optimizer (\eg, HJ Reachability~\cite{bansal2017hamilton}, graph search~\cite{hart1968formal,dijkstra2022note}, global optimization~\cite{houska2014branch,lee2011mixed}) satisfies the admissibility property, since more (non-negative) safety cost terms are added to the social cost as the order $\permorder$ increases. However, the currently available globally-optimal trajectory optimizers are computationally demanding. In contrast, local solvers like \gls{STP} are fast, but do not necessarily satisfy \autoref{assump:admissibility}.
Despite this limitation, we computationally assess that \gls{BNP} with \gls{STP} can recover solutions with low suboptimality in short computing times. In practice, we may manually enforce admissibility for local solvers through the following sufficient condition.
At an incomplete permutation $\permincomplete$, we set $\costsocial(\policy(\perm)) = \max\left(\costsocial(\policy(\perm)), \costsocial(\policy(\permincomplete))\right)$ for any $\perm \in \permmapping(\permincomplete)$. Although this procedure might lead to pruning some high-quality solutions, it also enables \gls{BNP} to support any local solver, and generally works well for the tested scenarios in \autoref{sec:result}.
}

\subsection{Practical Implementations}
\label{sec:practical}

We provide three examples of implementing \gls{STP} solvers (\textsc{SolveSTP} in Line~\ref{alg:stp:solve_stp} of~\autoref{alg:stp}).

\subsubsection{Expanded safety margin}
\label{sec:practical:margin}
Satisfaction of local optimality~\eqref{eq:assump:STP:best_f_1} is guaranteed with policies synthesized with differential dynamic programming (DDP)~\cite{mayne1973differential,murray1984differential}.
In practice, approximate DDP methods, such as \gls{ILQR}~\cite{li2004iterative}, are effective at finding a locally optimal solution.
A sufficient condition for enforcing~\eqref{eq:assump:STP:best_f_2} is the following:
\begin{enumerate}[(a.)]
\item The safety cost $\stagecostsafe^{ij}_t(\state^i_t,\state^j_t)$ is $0$ when there is no collision between player $i$ and $j$, and is positive otherwise, and 
\item Each follower ensures there is no collision with its predecessors.
\end{enumerate} 
For soft-constrained trajectory optimizers such as \gls{ILQR}, strict safety is generally not guaranteed even if policy $\policy^i$ locally optimizes $\tcost^i$~\cite[Theorem 17.3]{nocedal1999numerical}.
However, in practice, if the safety margin between each pair of agents is expanded, ~\cite{chen2021fastrack} reports an improved safe rate. This provides empirical planning robustness at the expense of increased conservativeness. To improve planning efficiency, we can also employ a principled algorithm~\cite{fridovich2018planning} that dynamically adjusts the safety margin while ensuring safety.

\subsubsection{Safety filters}
\label{sec:practical:sf}
To ensure strict safety, we can use a safety filter~\cite{hsu2023sf} to override, in a \emph{minimally invasive} fashion, a task policy $\tpolicy^{i}$ when necessary. This results in a \emph{filtered} policy $\tpolicy^{i,f}$ that yields a collision-free trajectory.

\begin{remark}
   Let the safety cost $\stagecostsafe_t^{ij}(\state^i_t, \state^j_t)$ attains its minimum (\eg, $\stagecostsafe_t^{ij}(\state^i_t, \state^j_t) = 0$) with a collision-free state pair $(\state^i_t, \state^j_t)$ and a greater value otherwise. A filtered policy $\tpolicy^{i,f}$ can be cast as a lexicographic Stackelberg equilibrium defined on the tuple $\left\langle \costsafe^i, \costindi^i\right\rangle$.
   It is not necessarily \gls{LSE} since the filtered policy may no longer satisfy~\eqref{eq:assump:STP:best_f_1}, \ie, $\tcost^i(\tpolicy^{i,f})$ can be suboptimal.
   We can use filter-aware planning approaches (\eg,~\cite{leung2020infusing,hu2022sharp,hu2023active,bejarano2023multi}) to optimize $\tcost^i$ while ensuring strict safety.
\end{remark}

\subsubsection{Constrained trajectory optimization}
\label{sec:practical:mpc}
We can employ constrained optimization methods (\eg, nonlinear \gls{MPC}~\cite{mayne2011tube, hu2018real, lopez2019dynamic, li2020robust}) as \gls{STP} solvers. Specifically, we can plan each agent's motion while avoiding the predecessors by encoding the safety specifications as \emph{hard constraints}.
Any locally optimal (and feasible) trajectory given by a constrained optimization-based \gls{STP} solver corresponds to a \emph{generalized} \gls{LSE} (c.f. \autoref{sec:guarantee_LSE}).

\begin{example}
    We use a combination of \gls{STP} designs in Sec.~\ref{sec:practical:margin} and~\ref{sec:practical:sf}.
    The \gls{STP} policy $\policy^i$ is given by an \gls{ILQR} planner using the $\ell_1$ penalty~\citep[Ch.17]{nocedal1999numerical} as the safety cost
    \begin{equation*}
        \stagecostsafe^{ij}_t(\state^i_t, \state^j_t; d) := \mu \max(d - \|p^i_t - p^j_t\|, 0),
    \end{equation*}
    where $\mu > 0$ is a large penalty coefficient, and the minimal separation $d$ is set to $0.4$, which is greater than the value (0.2) used for determining collision.
    With this expanded safety margin, we found, in most cases, that the computed trajectory is collision free.
    On top of this, we use a rollout-based safety filter~\cite[Sec.~3.3]{hsu2023sf} that overrides $\policy^i$ with a heuristic evasive strategy, which (empirically) gives zero collision rate, thereby yielding $\stagecostsafe^{ij}_t(\state^i_t, \state^j_t; 0.2) \equiv 0$.
\end{example}

\section{Experiments}
\label{sec:result}

We evaluate \gls{BNP} on three application domains: simulated \gls{ATC}, swarm formation in AirSim~\cite{shah2018airsim}, and hardware experiment of coordinating a delivery vehicle fleet.
The \gls{BNP}'s computations of the two simulated experiments are performed on a desktop with an AMD Ryzen $9$ $7950$X CPU.
The \gls{STP} solver uses \gls{ILQR}~\cite{li2004iterative} as the low-level single-agent trajectory optimizer; we implement \gls{ILQR} in JAX~\cite{jax2018github} for real-time performances. 
Since \gls{STP} computes players' policy sequentially, its computation time scales linearly with the number of agents, as we show in~\autoref{fig:time_stp}.
The code is available at \url{https://github.com/SafeRoboticsLab/Who_Plays_First}

\subsection{Air Traffic Control}
We consider a simulated \gls{ATC} setting where $\nagents \in \{4,5,6\}$ aircraft fly on collision trajectories.

\p{Baselines} We compare \gls{BNP} against the following baselines:
\begin{itemize}
    \item \emph{First-come-first-served (FCFS):} A higher order of play is assigned to an airplane that enters the \gls{ATC} zone earlier.
    \item \emph{Randomized:} We randomly assign the order of play.
    \item \emph{Nash ILQGame~\cite{fridovich2020efficient}:} Players use approximate Nash equilibrium strategies, which are agnostic of the orderings.
\end{itemize}
All methods use the same individual and safety costs for each player, and
\gls{BNP}, FCFS, and Randomized use the same \gls{STP} solver.
We consider three performance metrics. First, the \emph{closed-loop social cost}, \ie, the sum of all players' (normalized) running costs along the closed-loop trajectory. Second, the \emph{group flight time}, \ie, the time elapsed until the last aircraft reaches its destination. Third and last, the \emph{timeout rate}, the number of simulations where the group flight time exceeds $55$ seconds.
For each method, we performed $100$ randomized trials under the same random seed, each leading to a different initial state.
\autoref{tab:atc_results} summarizes the quantitative results.
Across different values of $\nagents$, \gls{BNP} outperformed the baselines in terms of all three metrics, and its performance improvement is magnified as the number of agents increases.
The Nash ILQ policy often resulted in poor coordination due to the lack of an external regulator that enforces an order of play to mitigate potential conflicts of interest among players.
In~\autoref{fig:atc_4_car} and~\autoref{fig:atc_8_car}, we show two representative trials with $N=4$ and $8$ airplanes to qualitatively demonstrate the trajectories computed by \gls{BNP}.

\begin{table}[htbp!]
\centering
\resizebox{\iftoggle{preprint}{0.7\textwidth}{\columnwidth}}{!}{%
\begin{tabular}{lcrrrrrr}
\toprule
\textbf{\textcolor{Periwinkle}{Metric}} & 
\textcolor{Periwinkle}{$\bm{\nagents}$} &  
\textbf{\textcolor{Periwinkle}{FCFS}} &  
\textbf{\textcolor{Periwinkle}{Randomized}} &
\textbf{\textcolor{Periwinkle}{Nash ILQ}} &  
\textbf{\textcolor{Periwinkle}{\gls{BNP} (ours)}}   
\\ \midrule
Cost &    &1.24 $\pm$ 0.29    &1.19 $\pm$ 0.23   &3.61 $\pm$ 0.94    &\textbf{1.04 $\pm$ 0.24}         \\
Group (s) & 4
&25.76 $\pm$ 5.0    &24.98 $\pm$ 4.06    &44.50 $\pm$ 8.84    &\textbf{20.41 $\pm$ 2.72}    \\
T/O rate & 
&1\%    &\textbf{0\%}    &36\%    &\textbf{0\%}    \\ \midrule
Cost &            
&1.73 $\pm$ 0.49    &1.70 $\pm$ 0.43    &5.40 $\pm$ 1.36    &\textbf{1.41 $\pm$ 0.3}                \\
Group (s) & 5
&29.27 $\pm$ 8.29    &28.84 $\pm$ 6.74    &48.34 $\pm$ 11.4    &\textbf{20.5 $\pm$ 6.04}       \\
T/O rate &
&6\%    &2\%    &64\%    &\textbf{1\%}       \\ \midrule
Cost &            
&2.43 $\pm$ 0.82    & 2.55 $\pm$ 1.09    &  8.73 $\pm$ 2.13   & \textbf{2.0 $\pm$ 0.51}               \\
Group (s) & 6
&31.83 $\pm$ 9.34   & 33.62 $\pm$ 12.05   &  50.9 $\pm$ 11.92    & \textbf{22.18 $\pm$ 7.74}      \\
T/O rate & 
&9\%    &20\%    &96\%    &\textbf{2\%}       \\ \bottomrule
\end{tabular}
}
\vspace{0.1em}
\caption{
Case study of \gls{ATC}. Mean normalized closed-loop cost (Cost), group flight time (Group), and timeout rate (T/O rate) in $100$ randomized safe trials for different numbers of airplanes ($\nagents = 4,5,6$).
\gls{BNP} consistently outperforms the baselines for all metrics and number of agents.
}
\label{tab:atc_results}
\end{table}

\begin{figure}[htbp!]
    \centering
    \includegraphics[width=\iftoggle{preprint}{0.5\textwidth}{0.9\columnwidth}]{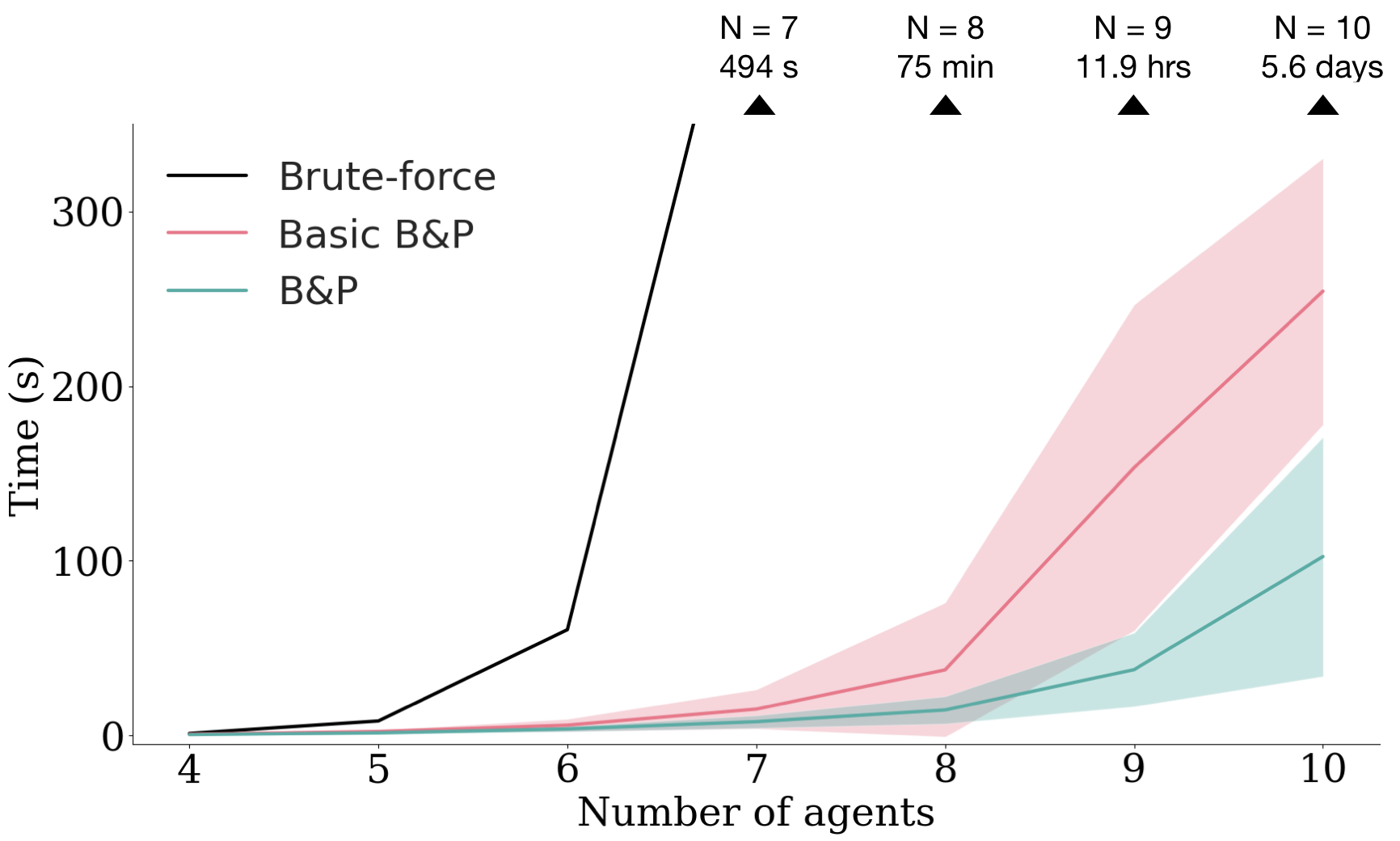}
    \caption{Computation time for obtaining the socially optimal permutation over $100$ randomized trials of the \gls{ATC} example. Solid lines and shaded areas represent the sample mean and standard deviation, respectively.
    Brute-force computation time for $\nagents \in [7,10]$ are indicated above the black triangles.}
    \label{fig:time_bnp}
\end{figure}

\begin{figure}[htbp!]
    \centering
    \includegraphics[width=\iftoggle{preprint}{0.5\textwidth}{0.9\columnwidth}]{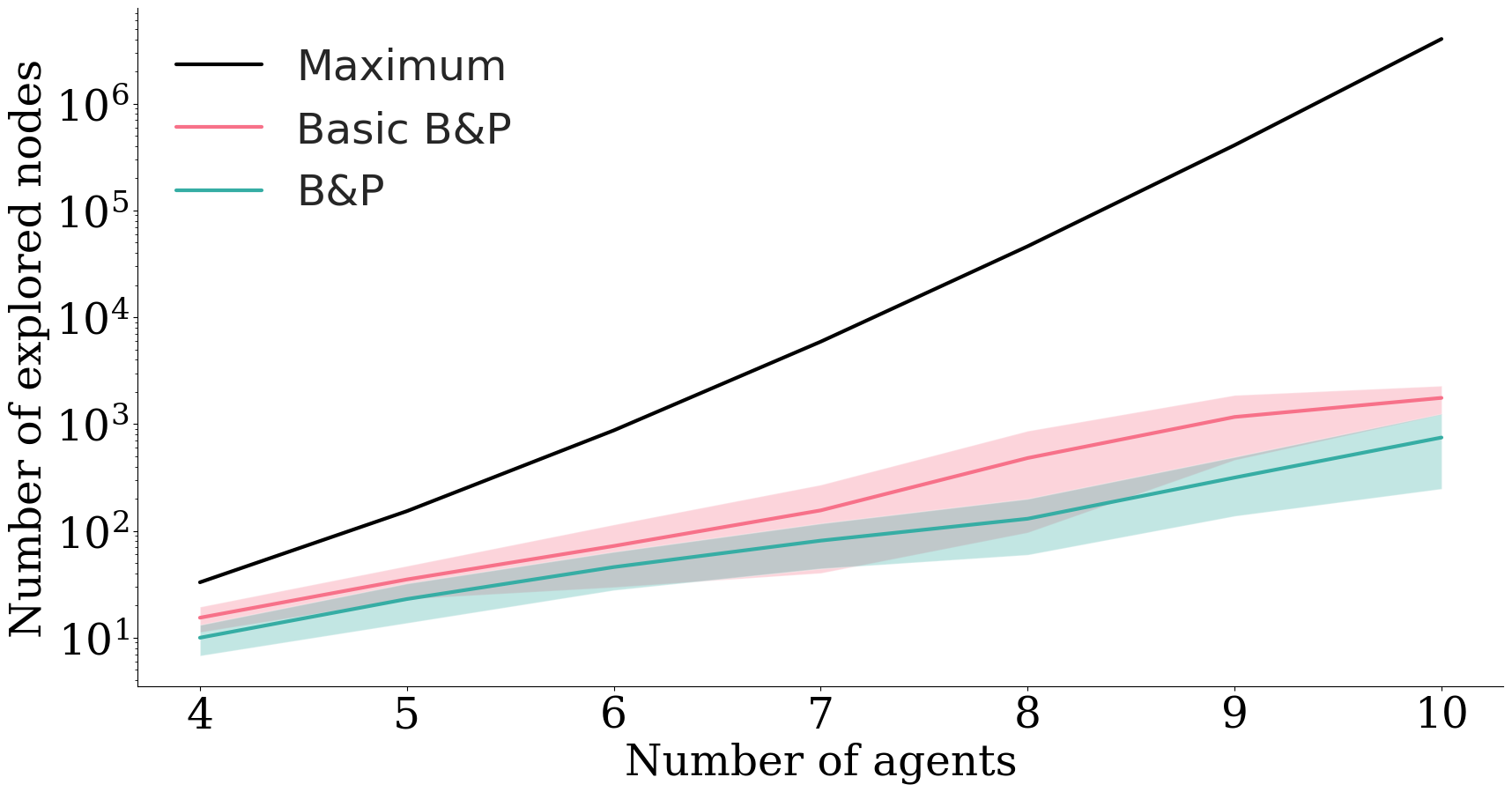}
    \caption{Number of explored nodes over $100$ randomized \gls{ATC} trials, plotted over the log scale. The solid black line represents the maximum number of explorable nodes.}
    \label{fig:expl_nodes}
\end{figure}

\begin{figure}[htbp!]
    \centering
    \includegraphics[width=\iftoggle{preprint}{0.5\textwidth}{0.9\columnwidth}]{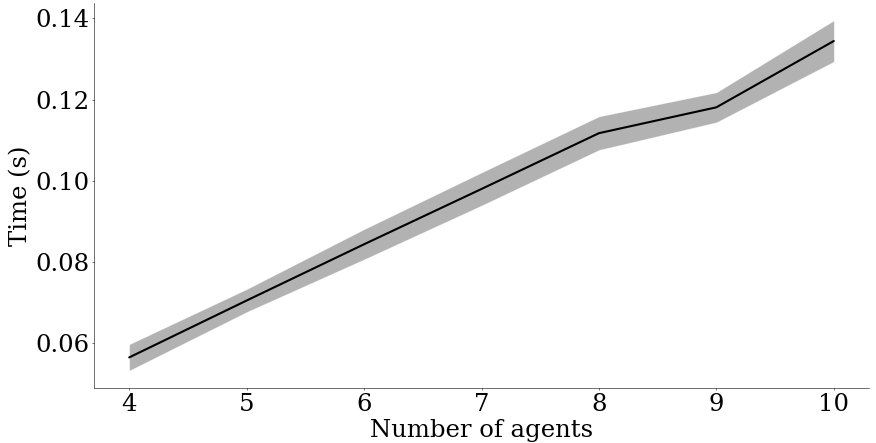}
    \caption{STP computation time over 100 randomized \gls{ATC} trials.}
    \label{fig:time_stp}
\end{figure}

\begin{figure*}[!hbtp]
    \centering
    \includegraphics[width=\iftoggle{preprint}{\textwidth}{1.85\columnwidth}]{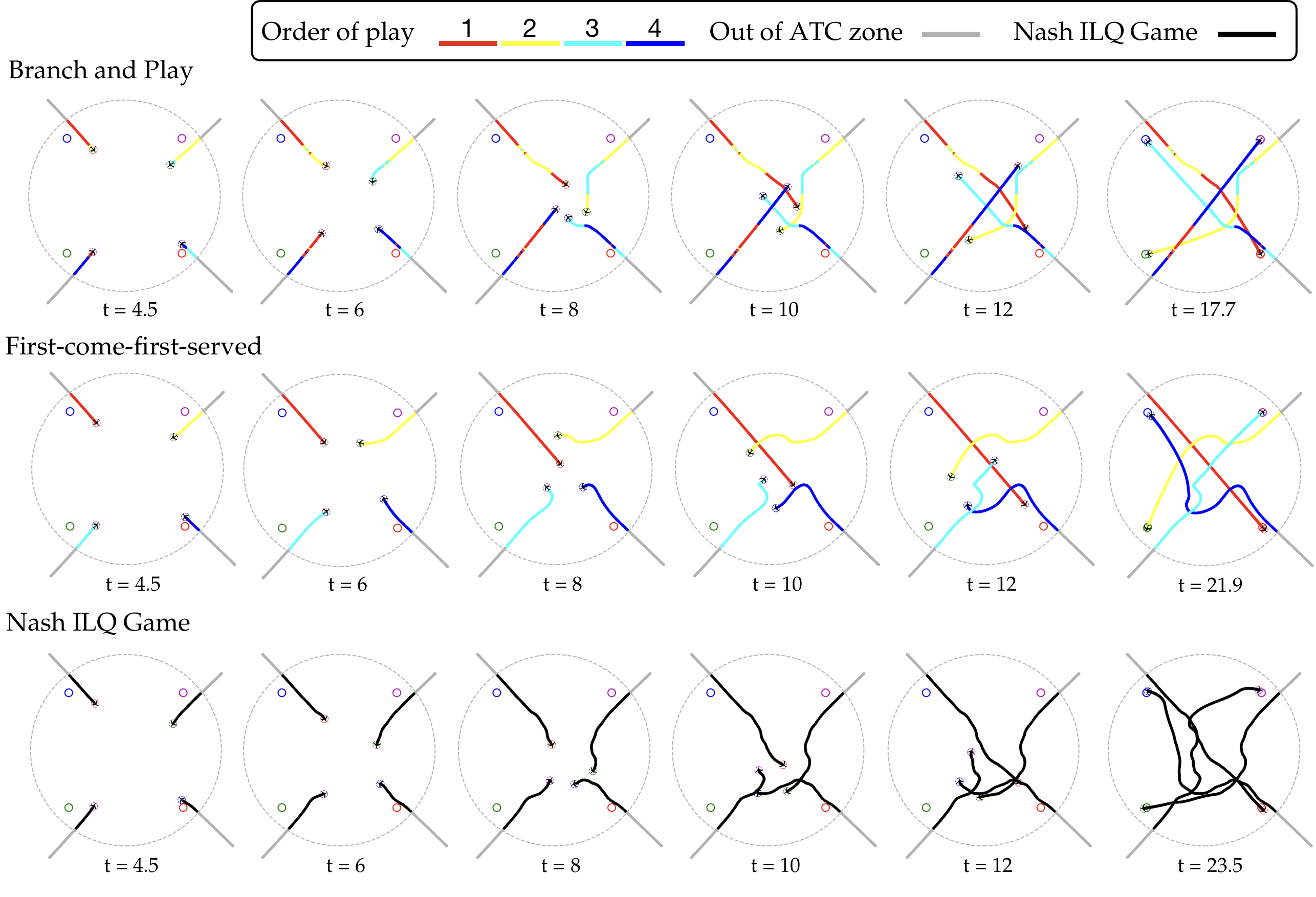}
    \caption{Case study: \gls{ATC} with $4$ airplanes initially flying on collision courses. Each agent's trajectory color represents its priority (warmer is higher). The \gls{ATC} zone is the dashed grey circle.
    \gls{BNP}~completed the task in $17.9$s with a normalized closed-loop social cost of $0.78$.
    \gls{FCFS} completed the task in $21.4$s ($+19.6 \%$ w.r.t. \gls{BNP}) with a normalized closed-loop social cost of $1.0$ ($+28.2\%$ w.r.t. \gls{BNP}).
    Nash ILQGame completed the task in $23.5$s ($+31.3 \%$ w.r.t. \gls{BNP}) with a normalized closed-loop social cost of $1.54$ ($+97.4\%$ w.r.t. \gls{BNP}). The trajectories are less smooth due to the lack of coordination.}
    \label{fig:atc_4_car}
\end{figure*}

\begin{figure*}[!hbtp]
    \centering
    \includegraphics[width=\iftoggle{preprint}{\textwidth}{1.95\columnwidth}]{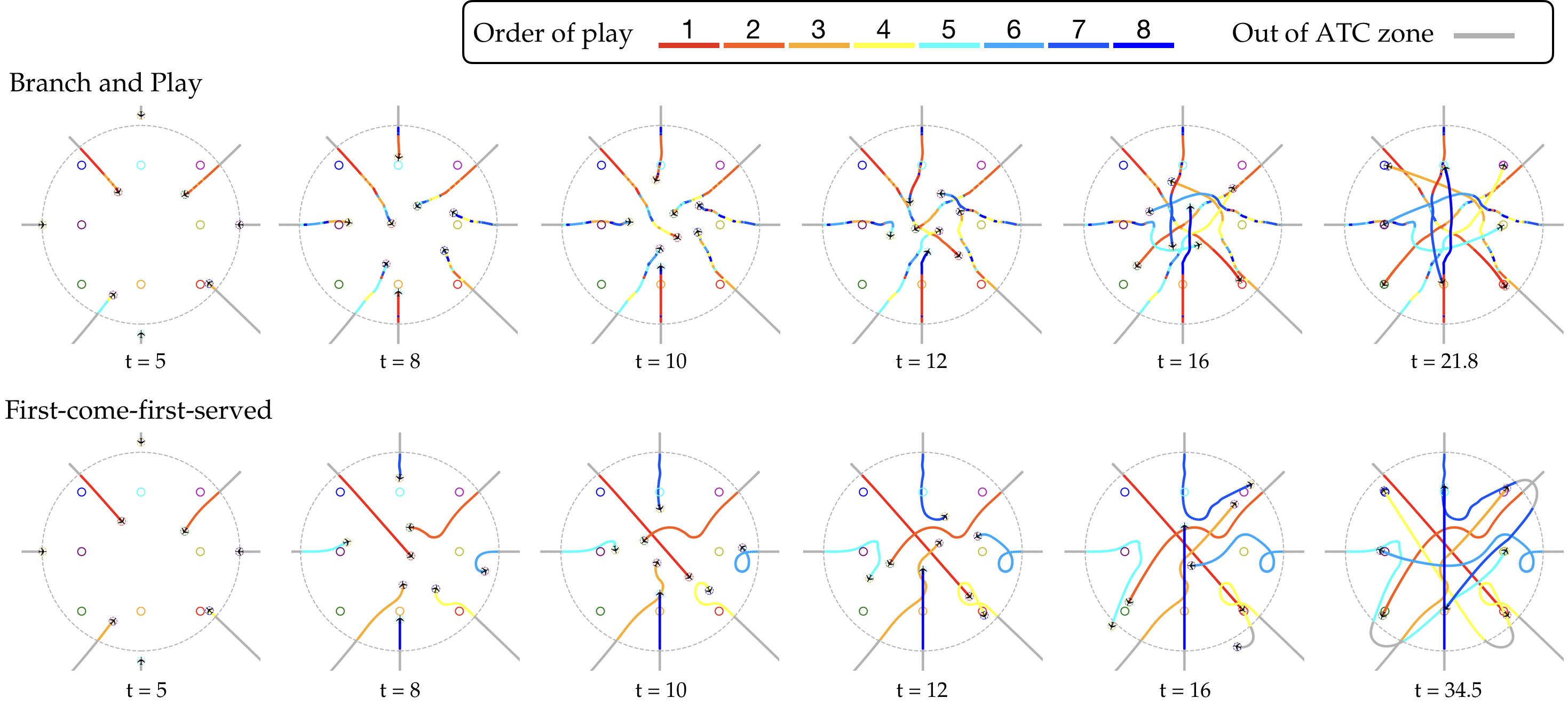}
    \caption{Case study: \gls{ATC} with $8$ airplanes initially flying on collision courses. Each agent's trajectory color represents its priority (warmer is higher). The \gls{ATC} zone is the dashed grey circle.
    All airplanes coordinated with \gls{BNP}~arrived at their destination in $21.8$s with a normalized closed-loop social cost of $2.73$.
    The \gls{FCFS} protocol resulted in severe congestion among the airplanes. Three lower-priority agents had to turn around and exit the \gls{ATC} zone in order to avoid colliding with higher-priority agents.
    As a result, it took $34.5$s ($+58.3 \%$ w.r.t. \gls{BNP}) for all airplanes to arrive at their destinations, incurring a normalized closed-loop social cost of $3.69$ ($35.2\%$ w.r.t. \gls{BNP}).
    Nash ILQ was not able to complete the planning task due to a timeout.}
    \label{fig:atc_8_car}
\end{figure*}

\begin{figure*}[!hbtp]
    \centering
    \includegraphics[width=\iftoggle{preprint}{\textwidth}{1.95\columnwidth}]{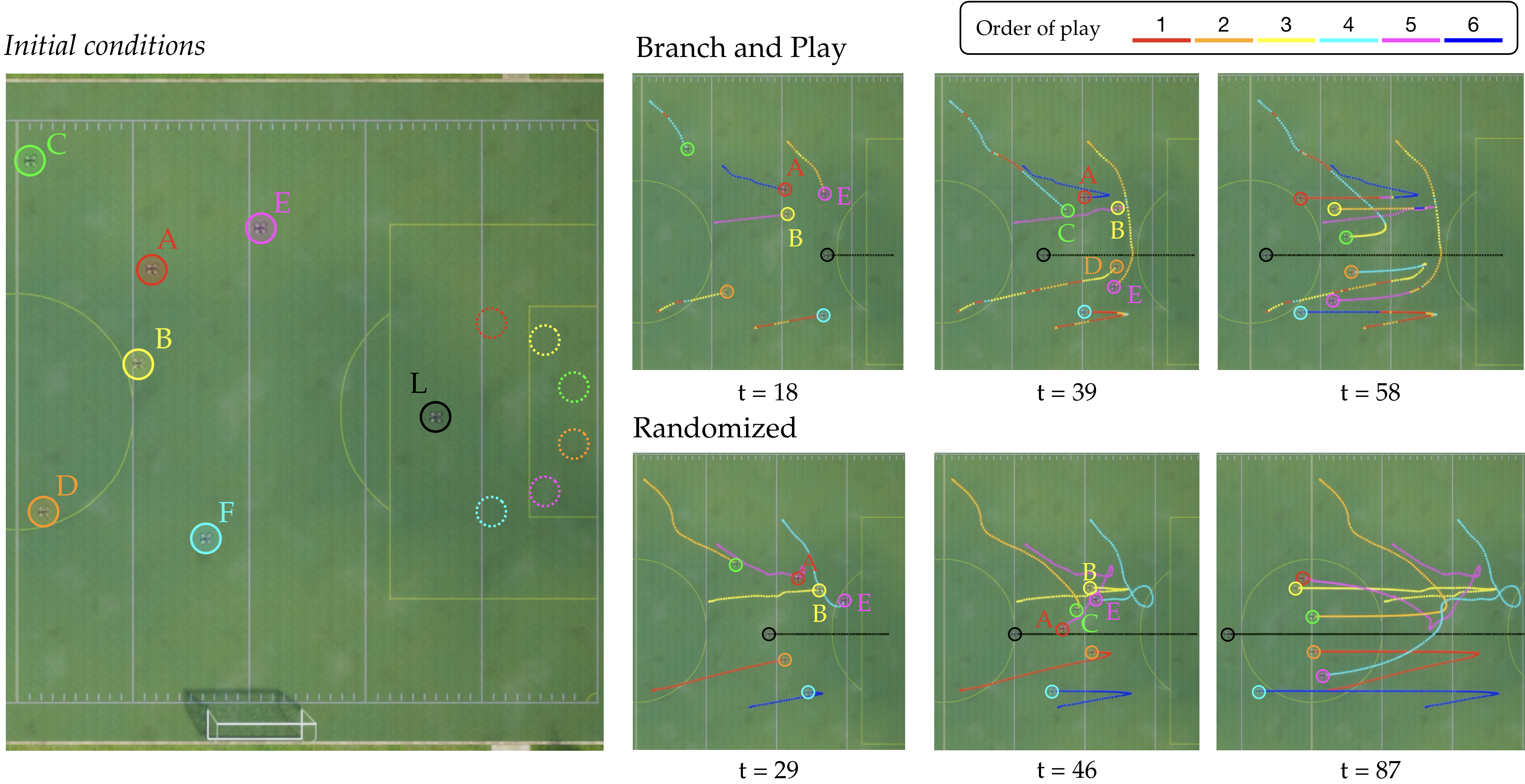}
    \caption{Case study: quadrotor swarm formation. Each agent's trajectory color represents its priority (warmer is higher). The hollow circles with dashed boundaries represent designated locations of task quadrotors in the half circle to be formed around the leading quadrotor. Compared to the \gls{STP} policy with a randomized permutation, \gls{BNP} provides a better trajectory in terms of formation time and reduced extra collision-avoidance maneuvers.}
    \label{fig:airsim}
\end{figure*}

Next, we investigate the practical scalability of \gls{BNP} increasing $\nagents$ beyond $6$.
To understand the benefit of pairwise-collision pruning introduced in~\autoref{sec:wpf:expl_pruning}, we consider a \textit{basic} version of \gls{BNP}, \new{which removes the collision-based pruning feature introduced in \cref{subsec:pairwise_prune} while keeping the native bound-based pruning.}
We also consider a brute force method that, in the worst case, has the same complexity of \gls{BNP}. \autoref{fig:time_bnp} shows that \gls{BNP} outperforms the two baselines and only uses a fraction of the time needed by the brute force method. 
In addition, with pairwise-collision pruning, we reduce, on average, the computation time by half.
In ~\autoref{fig:expl_nodes}, we show that the pruning strategy effectively reduces the search space, and the algorithm converges only having to explore half of the nodes on average compared to Basic \gls{BNP}.
Finally, we also report the computation time of the \gls{STP} subroutine for varying numbers of agents in~\autoref{fig:time_stp}.

\begin{figure}[!hbtp]
    \centering
    \includegraphics[width=\iftoggle{preprint}{0.5\textwidth}{0.8\columnwidth}]{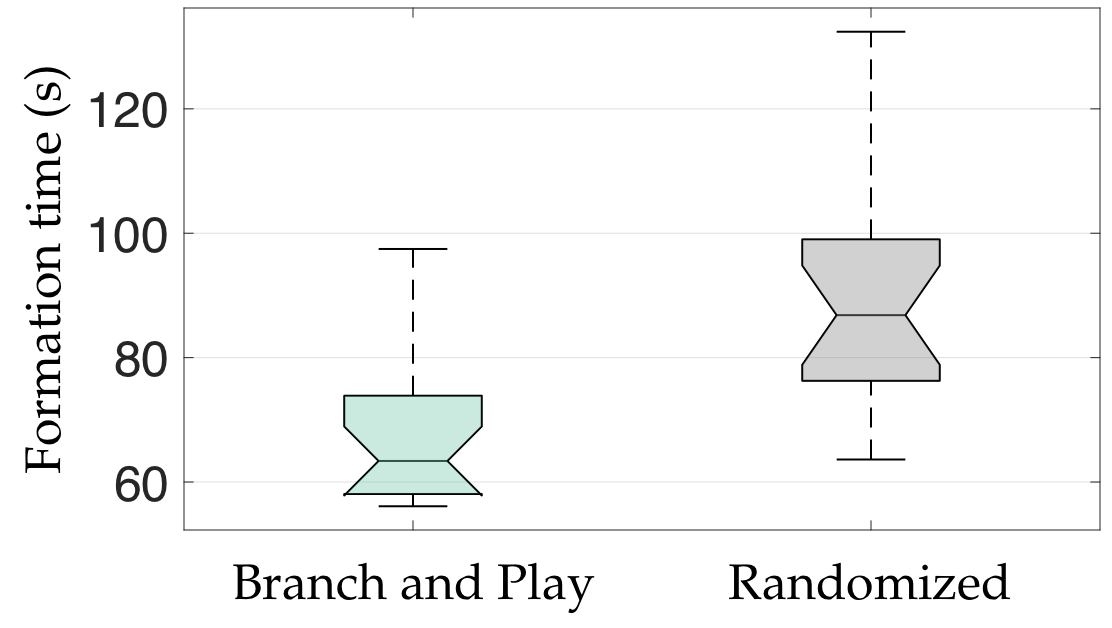}
    \caption{Time spent by \gls{BNP} and the randomized baseline to complete the swarm formation in $20$ randomized trials. Central marks, bottom, and top edges of the boxes indicate the median, $25$th, and $75$th percentiles, respectively. The maximum whisker length is set to $1.5$, which leads to $99.3\%$ coverage if the data is normally distributed.}
    \label{fig:box_plot}
\end{figure}

\begin{figure}[!hbtp]
    \centering
    \includegraphics[width=\iftoggle{preprint}{0.5\textwidth}{0.8\columnwidth}]{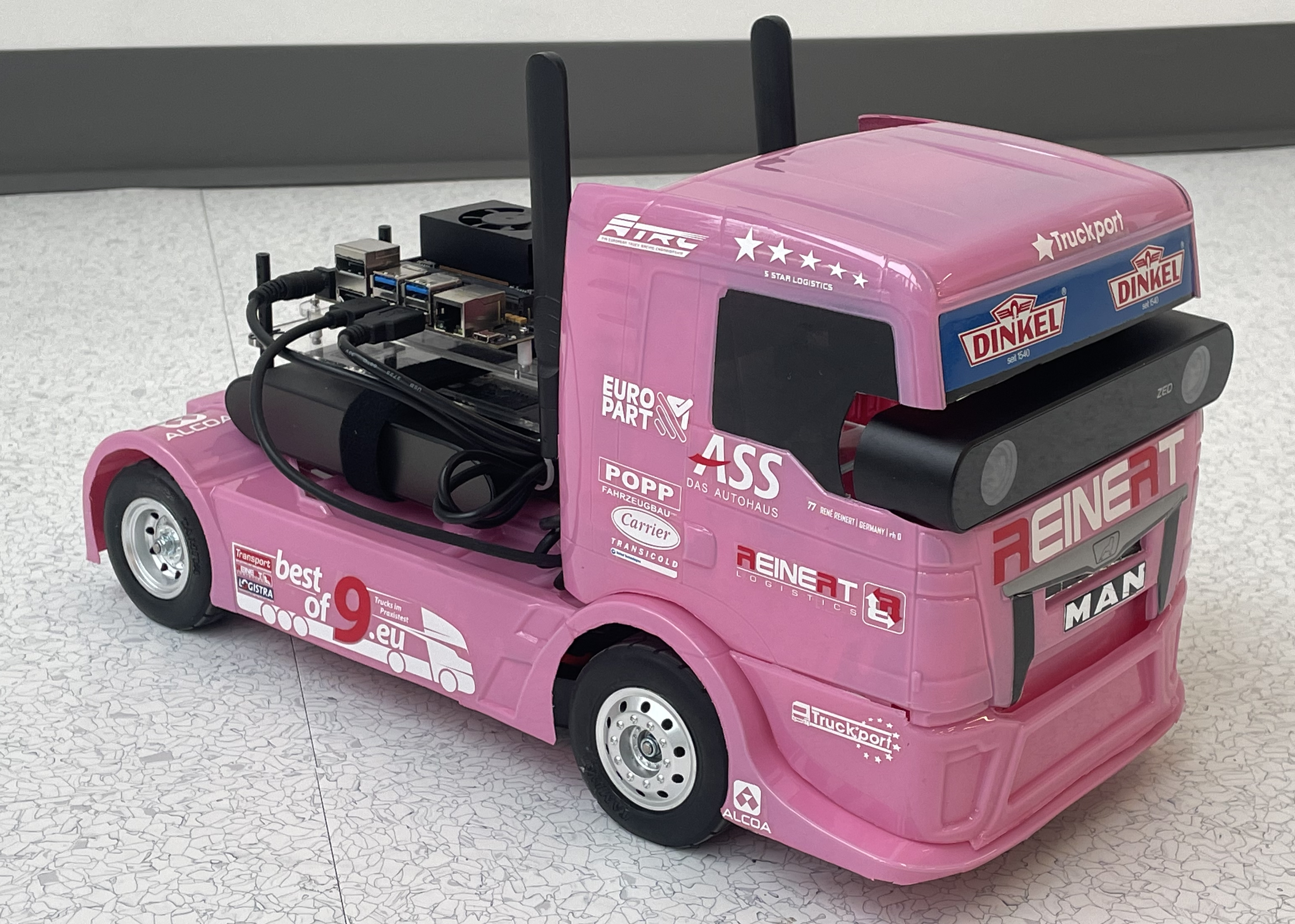}
    \caption{The autonomous miniature truck platform.}
    \label{fig:truck}
\end{figure}

\subsection{Quadrotor Swarm Formation}
In the second example, we apply \gls{BNP} to a quadrotor swarm formation task in AirSim~\cite{shah2018airsim}, a high-fidelity multi-agent simulator for drones. We respawn six \textit{task} quadrotors (A, B, C, D, E, and F) at random, non-collision locations within a soccer field (\autoref{fig:airsim}).
The goal of the quadrotors is to navigate towards a \textit{leading} agent (L) and form a half circle around it, where the locations of each task quadrotor in the half circle are predetermined at random.
Each task quadrotor is responsible for avoiding the leading quadrotor, which independently plans its trajectory towards a target location and communicates it with the task quadrotors.
Similar to the \gls{ATC} example, we restrict all quadrotors at the same altitude so that they plan on a horizontal space.
In \gls{STP}, each quadrotor plans with \gls{ILQR} using the double integrator model, producing a collision-free reference path and velocity profile, which are tracked by a carrot-following algorithm for low-level thrust control. \gls{BNP} and \gls{STP} run at $2$Hz and $10$Hz, respectively.

We show a representative trial in~\autoref{fig:airsim}.
At $t = 18$, \gls{BNP} assigned a high priority to E, who was distant from the formation. In contrast, A and B had a lower priority, as they were already close to their target positions.
Similarly, at a later time step $t=39$, A and B again yielded to C, who had not yet reached its destination. At the same time, E gave way to D to ensure the latter could enter its position uninterrupted. The formation was complete at $t=58$.

We also tested \gls{STP} with a random permutation, which resulted in less efficient coordination among the quadrotors.
In particular, at $t=29$, B intruded into the courses of A and E, forcing them to take evasive maneuvers and deviate from their nominal formation plans. The planning of A, which had a low priority, was further interrupted by agents C and E around $t=46$. The formation was not yet completed at $t=87$.

In~\autoref{fig:box_plot}, we represent the formation time from 20 trials with randomized initial states (both task and leading quadrotors).
The mean formation time with \gls{BNP} and the randomized permutations are $67.39$s and $89.11$s, respectively.

\begin{figure}[!hbtp]
    \centering
    \includegraphics[width=1.0\columnwidth]{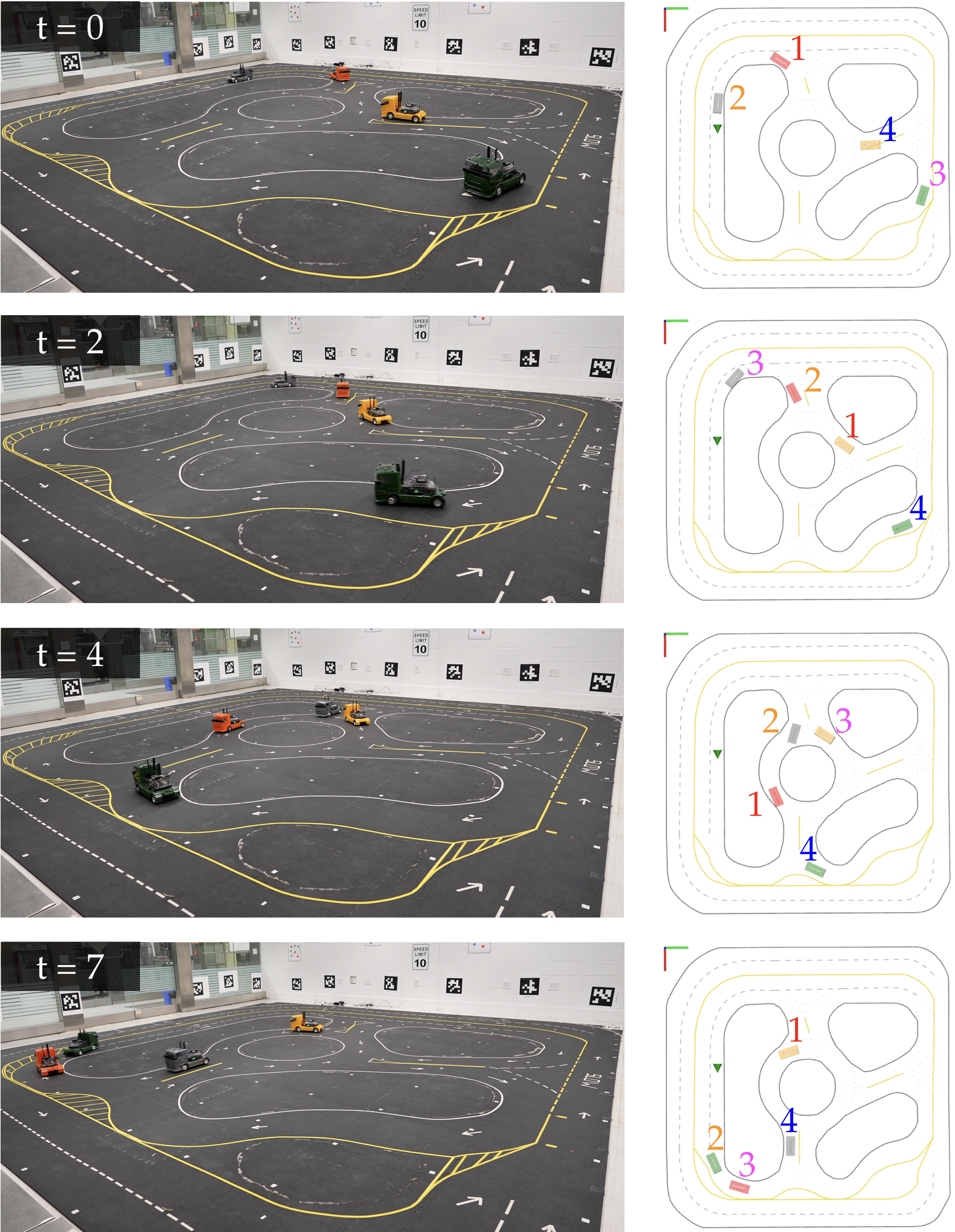}
    \caption{Case study: coordinating a delivery vehicle fleet at a roundabout. \emph{Left column:} Experiment snapshot at different time instances. \emph{Right column:} Corresponding birds-eye-view of the truck footprints on the map, and their orders of play assigned by \gls{BNP} indicated by the numbers near the footprints. All trucks share the same destination indicated by the green triangle.}
    \label{fig:truck_1}
\end{figure}

\begin{figure}[!hbtp]
    \centering
    \includegraphics[width=1.0\columnwidth]{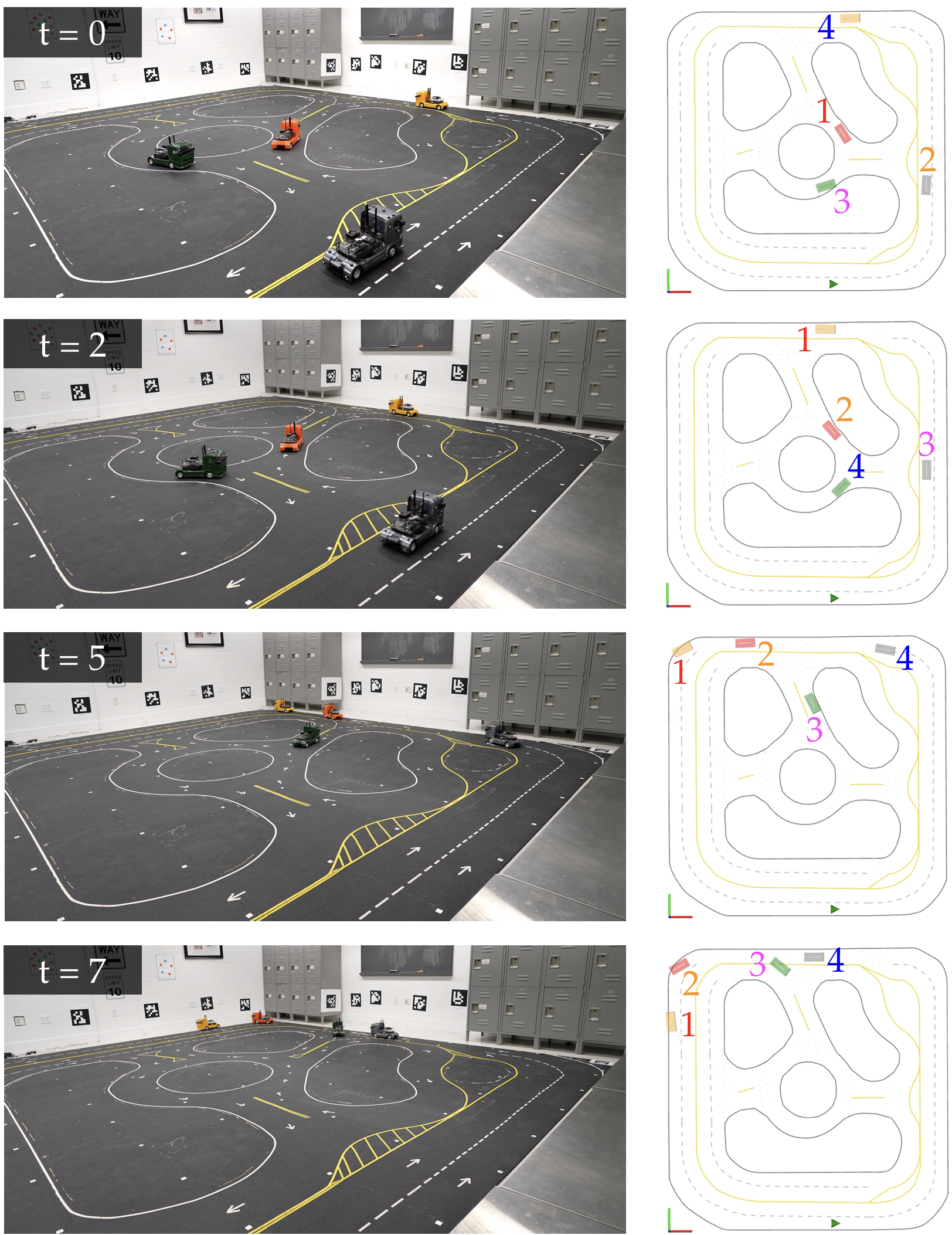}
    \caption{Case study: coordinating a delivery vehicle fleet to merge onto a highway (the outer loop). \emph{Left column:} Experiment snapshot at different time instances. \emph{Right column:} Corresponding birds-eye-view of the truck footprints on the map, and their orders of play assigned by \gls{BNP} indicated by the numbers near the footprints. All trucks share the same destination indicated by the green triangle.}
    \label{fig:truck_2}
\end{figure}

\subsection{Hardware Demonstration on a Delivery Vehicle Fleets}
In the last example, we employ \gls{BNP} for a hardware experiment where we attempt to coordinate an autonomous delivery vehicle fleet (\autoref{fig:truck}) in a scaled metropolis.
Our experimental platform consists of multiple homemade mini-trucks built on a $1/10$ RC chassis and equipped with the Nvidia Jetson Xavier NX for onboard computation of a localization algorithm and \gls{STP}.
We use a custom visual-inertial SLAM algorithm along with the AprilTag \cite{wang2016apriltag} detection to localize each vehicle in a pre-defined road map (right columns of \autoref{fig:truck_1} and~\autoref{fig:truck_2}).
Each truck runs \gls{STP} with an ILQR trajectory planner to track a reference path. It also respects the road boundaries, and avoids collisions with the leading vehicles.
The reference path is found as the shortest path (subject to traffic rules, \eg, the direction of a road) between the truck's initial and target positions using a graph search algorithm.
\gls{BNP} computations are carried out on a desktop with an Intel i$7$-$7700$K CPU.
All trucks receive the order of play and broadcast their positions and planned trajectory through the robot operating system (ROS).
\autoref{fig:system} provides a picture of the setup.

\p{Roundabout}
In the first experiment (\autoref{fig:truck_1}), 
$\nagents=4$ trucks interacted near a roundabout: $3$ of them entered the roundabout (the yellow, red, and grey trucks) while the green one drove on a side road and interacted with the red truck near the exit of the roundabout.
We initialized \gls{BNP} with a random permutation, which was deemed optimal since there was no collision detected among all agents' plans ($2$s lookahead) at $t=0$.
During $t \in [2, 4]$, \gls{BNP} transitioned from prioritizing the yellow vehicle to letting it yield to the red and grey trucks, which were about to enter the roundabout. 
Although contrary to common traffic rules, this unusual yielding improved the efficiency of the fleet as a whole, since, otherwise, both the red and grey trucks would have waited until the yellow truck passed.
Between $t \in [4,7]$, the priority between the red and green trucks swapped--a natural decision as the green truck had already led the red one towards the target (indicated by the green triangle). 
In the meantime, the yellow truck, now the last in the platoon, was assigned the highest priority to speed up and avoid further delays.

\p{Highway merging}
In the second experiment (\autoref{fig:truck_2}), we consider a traffic-merging scenario when all trucks were routed to drive along a highway (the outer loop) toward the destination.
At $t=0$, two trucks (yellow and grey) were already on the highway while the other two (red and green) were about to merge onto the highway from the roundabout. Similarly to the roundabout experiment, we initialized \gls{BNP} with a random permutation.
At $t=2$, the priority between the yellow and red trucks switched, which ensured the former vehicle cruising along the highway uninterrupted.
Three seconds later, \gls{BNP} realized that the green vehicle could exit the roundabout and merge onto the highway before the grey vehicle reached the merging point.
Therefore, the priority between the two trucks was swapped.
At $t=7$, the grey vehicle yielded to the green one, which then safely merged onto the highway, a platoon was formed, and the coordination was complete.

\section{Limitations and Future Work}
\label{sec:limitation}

In this paper, we consider a restricted class of multi-robot planning problems in which the interactions among agents only concern collision-related safety---an assumption required by \gls{STP}.
We expect \gls{BNP} to unleash its full potential with more versatile Stackelberg subroutines that support more generic cost structures.
Similarly, the integration of a data-driven or learning-based Stackelberg subroutine can potentially enable \gls{BNP} to scale even to larger problems. As the number of agents increases, optimization-based Stackelberg solvers (\eg, the \gls{ILQR}-based \gls{STP} implementation) slow down inevitably, while a learning-based one might not.
Finally, we see an open opportunity to investigate more sophisticated, and even application-specific, branching, exploration, and pruning strategies.
In particular, our current implementation of \gls{BNP} tree search is based on sequentially solving \gls{STP} subproblems.
We may employ parallelism for solving \gls{STP}---a common practice in tree-based optimization---to further reduce the computation time.
With the above improvements, we expect to significantly improve the computation performance of \gls{BNP}.

\section{Conclusion} 
\label{sec:conclusion}
In this paper, we introduced \gls{BNP}, a novel algorithm that efficiently computes the socially optimal order of play for $N$-player Stackelberg trajectory games. As a subroutine to \gls{BNP}, we extended \gls{STP} to compute a valid local Stackelberg equilibrium.
We demonstrated the versatility and efficiency of \gls{BNP} by an extensive experiment campaign on air traffic control, quadrotor swarm formation, and coordination of an autonomous delivery truck fleet.
\gls{BNP} consistently outperformed the baselines, and provided solutions with better robustness and, importantly, optimal social costs.

\balance
\bibliographystyle{IEEEtran}
\bibliography{IEEEabrv,references.bib}

\newpage
\appendix

\iftoggle{preprint}{
\section{List of Acronyms}
}
{
\subsection{List of Acronyms}
}
\renewcommand{\glossarysection}[2][]{}
\setglossarystyle{mylist}
\renewcommand*{\glsgroupskip}{}
\printglossary[]

\begin{figure*}[tp]
    \centering
    \iftoggle{preprint}{
    \includegraphics[width=0.4\columnwidth]{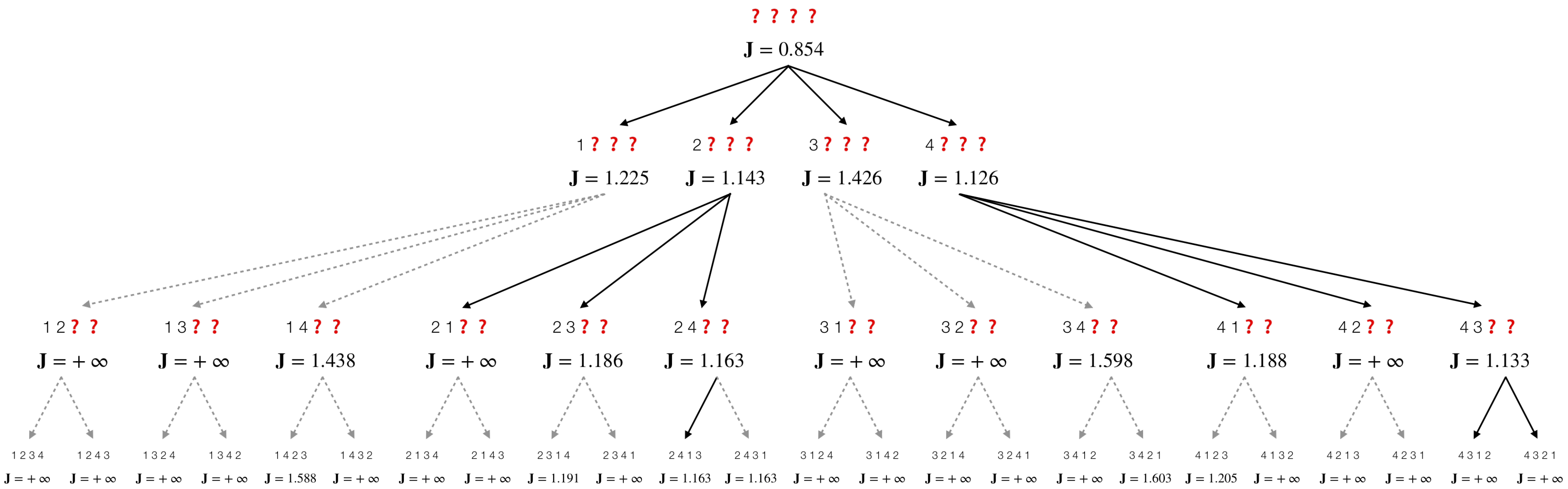}
    }
    {
    \includegraphics[width=2.0\columnwidth]{figs/bnp_tree_full.pdf}
    }
    \caption{The full \gls{BNP} search tree in \autoref{fig:bnp_tree}. Dashed grey lines indicate pruned branches. $\costsocial = +\infty$ corresponds to cases when \gls{STP} fails to compute a collision-free \gls{LSE}.}
    \label{fig:bnp_tree_full}
\end{figure*}

\iftoggle{preprint}{
\section{Aircraft Model for Planning}
}{
\subsection{Aircraft Model for Planning}
}
\label{apdx:model}
We assume the airplanes cannot change altitude.
To this end, we model each airplane with the unicycle model:
\begin{equation*}
    \begin{aligned}
    \dot{p}^i_x & =v^i \cos \left(\theta^i\right) \\
    \dot{p}^i_y & =v^i \sin \left(\theta^i\right) \\
    \dot{v}^i &= a^i \\
    \dot{\theta}^i & =\omega^i
    \end{aligned}
\end{equation*}
where $p^i := (p_x^i, p_y^i)$ is the position, $v^i$ is the speed in the body frame, and $\theta^i$ is the heading in the global coordinate. 
The control inputs include the acceleration $a^i$ and the turn rate $\omega^i$.
We time-discretize the model with the forward Euler method and use it subsequently in planning.

\iftoggle{preprint}{
\section{Proof of~\autoref{lem:courtesy}}
}{
\subsection{Proof of~\autoref{lem:courtesy}}
}
\label{apdx:lem:courtesy}
\begin{proof}
    We start with $N=2$. Suppose there exists a $\tpolicy^1 \in \tpolicyset^1(\policy^1)$ such that $\cost^1(\tpolicy^{1}(\state^1_0)) < \cost^1(\policy^{1}(\state^1_0))$.
    Since the strategy of the leader $\policy^1$ optimizes $\tcost^1 = \costindi^1$, then we have that $\costsafe^1(\ttraj^1(\tpolicy^1), \ttraj^2(\tpolicy^2(\tpolicy^1))) < \costsafe^1(\traj^1(\policy^1), \traj^2(\policy^2(\policy^1)))$, where $\tpolicy^2(\tpolicy^1)$ denotes the follower's new policy in response to the leader's policy switch.
    This, however, implies that $\costsafe^2(\ttraj^1(\tpolicy^1), \ttraj^2(\tpolicy^2(\tpolicy^1))) < \costsafe^2(\traj^1(\policy^1), \traj^2(\policy^2(\policy^1)))$ since the safety costs are symmetric, contradicting~\eqref{eq:assump:STP:best_f_2}.
    For $N=3$, we know from the above reasoning that player $2$ will not switch policy when the leader deviates, thus yielding the same state trajectory.
    Therefore, the problem reduces to the $N=2$ case by viewing the leader and player $2$ jointly as a single player.
    The same result holds for $N>3$ by induction.
\end{proof}

\iftoggle{preprint}{
\section{Empirical Validation of Assumptions and \autoref{thm:bnp}}}
{
\subsection{Empirical Validation of Assumptions and \autoref{thm:bnp}}
}
\new{We provide in \autoref{fig:bnp_tree_full} the full \gls{BNP} search tree for the $4$-player \gls{ATC} example in \autoref{fig:bnp_tree}.
In this example, we numerically verified that admissibility (\autoref{assump:admissibility}) and best-effort cautious follower (\autoref{assump:STP:best_f}) are both satisfied for all subgames.
As a result, \gls{BNP} was able to find the socially-optimal Stackelberg equilibrium while pruning half of the branches in the tree, thus verifying \autoref{thm:bnp}.
}

\end{document}